\LetLtxMacro\amsproof\proof
\LetLtxMacro\amsendproof\endproof
  \LetLtxMacro\proof\amsproof
  \LetLtxMacro\endproof\amsendproof
\newtheorem{theorem}{Theorem}
\newtheorem{proposition}[theorem]{Proposition}
\newtheorem{definition}{Definition}
\newtheorem{lemma}[theorem]{Lemma}
\newtheorem{corollary}[theorem]{Corollary}
\newtheorem*{theorem*}{Theorem}
\newtheorem*{remark*}{Remark}
\xpatchcmd{\proof}{\itshape}{\normalfont\proofnamefont}{}{}
\DeclarePairedDelimiter\floor{\lfloor}{\rfloor}
\definecolor{darkgreen}{rgb}{0,0.5,0}
\definecolor{purple}{rgb}{1,0,1}
\newcommand{\kibitz}[2]{\ifnum\Comments=1\textcolor{#1}{#2}\fi}
\newcommand{\Hcal}{\mathcal{H}}
\newcommand{\Tcal}{\mathcal{T}}
\newcommand{\Xcal}{\mathcal{X}}
\DeclareMathOperator*{\argmin}{arg\,min}
\newcommand{\indicator}{\mathbbm{1}}
\newcommand*\samethanks[1][\value{footnote}]{\footnotemark[#1]}
\title{Apple Tasting: Combinatorial Dimensions and Minimax Rates}
\author{Vinod Raman\thanks{Authors contributed equally}, Unique Subedi\samethanks , Ananth Raman, and Ambuj Tewari}
\date{}
\begin{document}

\maketitle

\begin{abstract}
     In online binary classification under \emph{apple tasting} feedback, the learner only observes the true label if it predicts ``1". First studied by \cite{helmbold2000apple},  we revisit this classical partial-feedback setting and study online learnability from a combinatorial perspective. We show that the Littlestone dimension continues to provide a tight quantitative characterization of apple tasting in the agnostic setting, closing an open question posed by \cite{helmbold2000apple}. In addition, we give a new combinatorial parameter, called the Effective width, that tightly quantifies the minimax expected mistakes in the realizable setting.  As a corollary, we use the Effective width to establish a \emph{trichotomy} of the minimax expected number of mistakes in the realizable setting. In particular, we show that in the realizable setting, the expected number of mistakes of any learner, under apple tasting feedback, can be $\Theta(1), \Theta(\sqrt{T})$, or $\Theta(T)$. This is in contrast to the full-information realizable setting where only $\Theta(1)$ and $\Theta(T)$ are possible. 
\end{abstract}

\section{Introduction}

In the standard online binary classification setting, a learner plays a repeated game against an adversary. In each round, the adversary picks a labeled example $(x, y) \in \mathcal{X} \times \{0,1\}$ and reveals the unlabeled example $x$ to the learner. The learner observes $x$ and then makes a prediction $\hat{y} \in \{0,1\}$. Finally, the adversary reveals the true label $y$ and the learner suffers the loss $\mathbbm{1}\{\hat{y} \neq y\}$ \citep{Littlestone1987LearningQW}. In many situations, receiving feedback after every prediction may not be realistic. For example, in spam filtering, emails that are classified as spam are often not verified by the user. Accordingly, the learner only receives feedback when an email is classified as ``not spam." In recidivism prediction, a person whose is predicted to re-commit a crime may not be released. Accordingly, we will not know whether this person would have re-committed a crime had they been released. Situations like these are known formally as ``apple tasting" \citep{helmbold2000apple}. In the generic model, a learner observes a sequence of apples, some of which may be rotten. For each apple, the learner must decide whether to discard or taste the apple. The learner suffers a loss if they discard a good apple or if they taste a rotten apple. Crucially,  when the learner discards an apple, they do not receive any feedback on whether the apple was rotten or not. 

Binary online classification under apple tasting feedback was first studied by \cite{helmbold2000apple} in the realizable setting.  Here, they give a simple and generic conversion of a deterministic online learner in the full-information setting into a randomized online learner in the apple tasting setting. In particular, they show that if $M_{+}$ and $M_{-}$ are upper bounds on the number of false positive and false negative mistakes of the deterministic online learner respectively, then the expected number of mistakes made by their conversion, under apple tasting feedback, is at most $M_{+} + 2\sqrt{T M_{-}}$. Along with these upper bounds, they provide lower bounds on the expected number of mistakes for randomized apple tasting learners in terms of the number of false positive and false negative mistakes made by any deterministic online learner in the full-information setting. That is, if there exists $M_{+}, M_{-} \in \mathbbm{N}$ such that every deterministic online learner in the full-information setting makes either at least $M_{+}$ false positive mistakes \emph{or} $M_{-}$ false negative mistakes, then every randomized online learner makes at least $\frac{1}{2}\min\left\{\frac{1}{2}\sqrt{T M_{-}}, M_{+}\right\}$ expected number of mistakes under apple tasting feedback. Finally, as an open question, they ask whether their results can be extended to the harder agnostic setting where the true labels can be noisy. 

While \cite{helmbold2000apple} establish bounds on the minimax expected number of mistakes in the realizable setting, their bounds are in terms of the existence of an algorithm with certain properties. This is in contrast to much of online learning theory, where minimax regret is often quantified in terms of combinatorial dimensions that capture the complexity of the hypothesis class \citep{Littlestone1987LearningQW, ben2009agnostic, DanielyERMprinciple, rakhlin2015online}. Accordingly, we revisit apple tasting and study online learnability from a combinatorial perspective. In particular, we are interested in identifying combinatorial dimensions that tightly quantify the minimax regret for apple tasting in both the realizable and agnostic settings. To that end, our main contributions are:
\begin{itemize}
    \item[(1)] We close the open question posed by \cite{helmbold2000apple} by showing that the minimax expected regret in the agnostic setting, under apple tasting feedback, is at most $3\sqrt{\operatorname{L}(\mathcal{H}) T \log(T) }$ and at least  $\sqrt{\frac{\operatorname{L}(\mathcal{H}) T}{8} }$, where $\operatorname{L}(\mathcal{H})$ is the Littlestone dimension of $\mathcal{H}$. 
    \item[(2)] On the other hand, we show that the Littlestone dimension alone does not give a tight quantitative characterization in the realizable setting. Instead, we show that the minimax expected number of mistakes in the realizable setting, under apple tasting feedback is 
    $$\Theta\Bigl(\max \Bigl\{\sqrt{(\operatorname{W}(\mathcal{H})-1)T}, 1\Bigl\}\Bigl),$$
    where $\operatorname{W}(\mathcal{H})$ is the \emph{Effective width} of $\mathcal{H}$, a new combinatorial parameter that accounts for the asymmetric nature of the feedback. 
    
    \item[(3)] Using the bound above, we establish the following trichotomy on the minimax rates in the realizable setting: (i) $\Theta(1)$ when $\operatorname{W}(\mathcal{H}) = 1$, (ii) $\Theta(\sqrt{T})$ when $1< \operatorname{W}(\mathcal{H}) < \infty$, and (iii) $\Theta(T)$ when $\operatorname{W}(\mathcal{H}) = \infty$. 
\end{itemize}
To prove (1), we extend the EXP3.G algorithm from \cite{alon2015online} to binary prediction with expert advice. Then, we use the standard technique from \cite{ben2009agnostic} to construct an agnostic learner using a realizable, mistake-bound learner in the full-information setting. To prove the upper bound in (2), we define a new combinatorial parameter, called the Effective width, and use it to construct a deterministic online learner in the realizable, \emph{full-information} feedback setting with constraints on the number of false positive and false negative mistakes. We then use this online learner and a conversion technique from \cite{helmbold2000apple} to construct a randomized online learner in the realizable, \emph{apple tasting} feedback setting with the stated guarantee in (2). For the lower bound in (2), we consider a new combinatorial object called an \emph{apple tree} and use it to explicitly construct a hard, realizable stream for any randomized, apple tasting learner. This is in contrast to \cite{helmbold2000apple}, who prove lower bounds on the minimax expected number of mistakes by converting randomized apple tasting learners into deterministic full-information feedback learners.

\subsection{Related Works}

Apple tasting is usually presented as an example of a more general partial feedback setting called \emph{partial monitoring} games, where the player’s feedback is specified by a feedback matrix \citep{cesa2006prediction, bartok2014partial}. Of particular interest is the work by \cite{bartok2012role}, who gives a beautiful result (Theorem 2) characterizing the minimax rates in different partial monitoring games (including apple tasting). However, this is done for a slightly different setting where there is no hypothesis class $\mathcal{H}$, but just a \emph{finite} set of actions the learner can play. The goal here is to compete with the best fixed \emph{action} in hindsight. In contrast, in our setting, there is a hypothesis class, often \emph{infinite} in size, and the goal is compete against the best fixed \emph{hypothesis} in hindsight. Related to partial monitoring games is sequential prediction with \emph{graph feedback}, for which apple tasting feedback is also special case \citep{alon2015online}.  In this model, a learner plays a repeated game against an adversary. In each round, the learner selects one of $K$ actions but observes the losses for a subset of the actions determined by a combinatorial structure called a \emph{feedback graph}. \cite{alon2015online} classify feedback graphs into three types and establish a trichotomy on the rates of the minimax regret based on the type of graph. In this paper, we extend the online learner presented in \cite{alon2015online} to the setting of binary prediction with expert advice to establish the minimax regret of apple tasting in the agnostic setting.


In a parallel direction, there has been an explosion of work using combinatorial dimension to give tight quantitative characterizations of online learnability. For example, \cite{Littlestone1987LearningQW} proposed the Littlestone dimension and showed that it exactly characterizes the optimal mistake bound of deterministic learners for online binary classification in the full-information, realizable setting.  Later, \cite{ben2009agnostic} show that the Littlestone dimension also provides a tight quantitative characterization of the optimal expected regret in the full-information, agnostic setting. Later, \cite{DanielyERMprinciple} define a \emph{multiclass} extension of the Littlestone dimension and show that it provides a tight quantitative characterization of realizable and agnostic mutliclass online learnability under full-information feedback when the label space is finite. In their same work, \cite{DanielyERMprinciple} define the Bandit Littlestone dimension and show that it exactly characterizes the optimal mistake bound of deterministic learners in the realizable setting under partial feedback setting known as bandit feedback. \cite{daniely2013price} and \cite{raman24multiclass} later show that the Bandit Littlestone dimension also characterizes agnostic bandit online learnability. Beyond binary and multiclass classification, combinatorial dimensions have been used to characterize online learnability for regression \citep{rakhlin2015online}, list classification \citep{moran2023list}, ranking \citep{raman2023online}, and general supervised online learning models \citep{raman2023combinatorial}.



\section{Preliminaries}

\subsection{Notation}
Let $\Xcal$ denote the instance space and $\mathcal{H} \subseteq \{0, 1\}^{\mathcal{X}}$ denote a binary hypothesis class. Given an instance $x \in \mathcal{X}$, and any collection of hypothesis $V \subseteq \{0, 1\}^{\mathcal{X}}$, we let $V(x) := \{h(x): h \in V\}$ denote the projection of $V$ onto $x$. As usual, $[N]$ is used to denote $\{1, 2, \ldots, N\}$. 

\subsection{Online Classification and Apple Tasting}

In the standard binary online classification setting with full-information feedback a learner $\mathcal{A}$ plays a repeated game against an adversary over $T$ rounds. In each round $t \in [T]$, the adversary picks a labeled instance $(x_t, y_t) \in \mathcal{X} \times \{0, 1\}$ and reveals $x_t$ to the learner. The learner makes a (possibly randomized) prediction $\mathcal{A}(x_t) \in \{0, 1\}$. Finally, the adversary reveals the true label $y_t$ and the learner suffers the 0-1 loss $\mathbbm{1}\{\mathcal{A}(x_t) \neq y_t\}.$ Given a hypothesis class $\mathcal{H} \subseteq \{0, 1\}^{\mathcal{X}}$, the goal of the learner is to output predictions such that its \emph{expected regret}
$$\operatorname{R}_{\mathcal{A}}(T, \mathcal{H}) := \sup_{(x_1, y_1), ..., (x_T, y_T)} \left(\mathbbm{E}\left[\sum_{t=1}^T \mathbbm{1}\{\mathcal{A}(x_t) \neq y_t \}\right] - \inf_{h \in \mathcal{H}} \sum_{t=1}^T \mathbbm{1}\{h(x_t) \neq y_t\}\right)$$
\noindent is small, where the expectation is only over the randomness of the learner. A hypothesis class $\mathcal{H}$ is said to be online learnable under full-information feedback, if there exists an (potentially randomized) online learning algorithm $\mathcal{A}$ such that $\operatorname{R}_{\mathcal{A}}(T, \mathcal{H}) = o(T)$ while $\mathcal{A}$ receives the true label $y_t$ at the end of each round. If it is guaranteed that the learner always observes a sequence of examples labeled by some hypothesis $h \in \mathcal{H}$, then we say we are in the \emph{realizable} setting and the goal of the learner is to minimize its \emph{expected cumulative mistakes},
$$\operatorname{M}_{\mathcal{A}}(T, \mathcal{H}) := \sup_{h \in \Hcal}\sup_{x_1, ..., x_T}\mathbbm{E}\left[\sum_{t=1}^T \mathbbm{1}\{\mathcal{A}(x_t) \neq h(x_t)\} \right],$$
 where again the expectation is taken only with respect to the randomness of the learner. In the apple tasting feedback model, the adversary still picks a labeled instance $(x_t, y_t) \in \mathcal{X} \times \{0, 1\}$ and reveals $x_t$ to the learner. However, the learner only gets to observe the true label $y_t$ if they predict $\hat{y}_t = 1$. Analogous to the full-information setting, a hypothesis class $\mathcal{H} \subseteq \{0, 1\}^{\Xcal}$ is online learnable under apple tasting feedback, if there exists an online learning algorithm whose expected regret, \emph{under apple tasting feedback}, on any sequence of labeled instances is $o(T)$.

\begin{definition} [Agnostic Online Learnability under Apple Tasting Feedback]
\label{def:appltasting}
\noindent A hypothesis class $\Hcal$ is online learnable under apple tasting feedback, if there exists an algorithm $\mathcal{A}$ such that 
$\operatorname{R}_{\mathcal{A}}(T, \mathcal{H}) = o(T)$  while $\mathcal{A}$ only receives feedback when predicting $1$. 
\end{definition}

As in the full-information setting, if it is guaranteed that the sequence of examples is labeled by some hypothesis $h \in \mathcal{H}$, then we say we are in the \emph{realizable} setting and an analogous definition of learnability under apple tasting feedback follows.

\begin{definition} [Realizable Online Learnability under Apple Tasting Feedback]
\label{def:appltasting}
\noindent A hypothesis class $\Hcal$ is online learnable under apple tasting feedback in the \emph{realizable setting}, if there exists an algorithm $\mathcal{A}$ such that $\operatorname{M}_{\mathcal{A}}(T, \mathcal{H}) = o(T)$  while $\mathcal{A}$ only receives feedback when predicting $1$. 
\end{definition}

\subsection{Trees and Combinatorial Dimensions}
In online learning, combinatorial dimensions are defined in terms of \emph{trees}, a basic unit that captures temporal dependence. A binary tree $\mathcal{T}$ of depth $d$ is \emph{complete} if it admits the following recursive structure. A depth one complete binary tree is a single root node with left and right outgoing edges. A complete binary tree $\mathcal{T}$ of depth $d$ has a root node whose left and right subtrees are each complete binary trees of depth $d-1$. Given a complete binary tree $\mathcal{T}$, we can label its internal nodes and edges by elements of $\mathcal{X}$ and $\{0, 1\}$ respectively to get a \emph{Littlestone tree}.

\begin{definition}[Littlestone tree]
\noindent A Littlestone tree of depth $d$ is a \emph{complete} binary tree of depth $d$ where the internal nodes are labeled by instances of $\mathcal{X}$ and the left and right outgoing edges from each internal node are labeled by $0$ and $1$ respectively.
\end{definition}

Given a Littlestone tree $\mathcal{T}$ of depth $d$, a root-to-leaf path down $\mathcal{T}$ is a bitstring $\sigma \in \{0, 1\}^d$ indicating whether to go left ($\sigma_i = 0$) or to go right ($\sigma_i = 1$) at each depth $i \in [d]$. A path $\sigma \in \{0, 1\}^d$ down $\mathcal{T}$ gives a sequence of labeled instances $\{(x_i, \sigma_i)\}_{i = 1}^{d}$, where $x_i$ is the instance labeling the internal node following the prefix $(\sigma_1, ..., \sigma_{i-1})$ down the tree. A hypothesis $h_{\sigma} \in \mathcal{H}$ shatters a path $\sigma \in \{0, 1\}^d$, if for every $i \in [d]$, we have  $h_{\sigma}(x_i) = \sigma_i$. In other words, $h_{\sigma}$ is consistent with the labeled examples when following $\sigma$. A Littlestone tree $\mathcal{T}$ is shattered by $\mathcal{H}$ if for every root-to-leaf path $\sigma$ down $\mathcal{T}$, there exists a hypothesis $h_{\sigma} \in \mathcal{H}$ that shatters it. Using this notion of shattering, we define the Littlestone dimension of a hypothesis class. 

\begin{definition}[Littlestone dimension]
\noindent The Littlestone dimension of $\mathcal{H}$, denoted $\operatorname{L}(\mathcal{H})$, is the largest $d \in \mathbbm{N}$ such that there exists a Littlestone tree $\mathcal{T}$ of depth $d$ shattered by $\mathcal{H}$. If there exists shattered Littlestone trees $\mathcal{T}$ of arbitrary depth, then we say that $\operatorname{L}(\mathcal{H}) = \infty$. 
\end{definition}

 Remarkably, the Littlestone dimension gives a tight quantitative characterization of realizable learnability under full-information feedback. In particular, \cite{Littlestone1987LearningQW} gives a generic deterministic algorithm, termed the Standard Optimal Algorithm (SOA), and shows that it makes at most $\operatorname{L}(\mathcal{H})$ number of mistakes on any realizable stream. Moreover, they showed that for every deterministic learner, there exists a realizable stream that can force at least  $\operatorname{L}(\mathcal{H})$ mistakes, proving that the Ldim exactly quantifies the mistake bound for deterministic realizable learnability under full-information feedback.

 Under apple tasting feedback, one can use the lower and upper bounds derived by \cite{helmbold2000apple} to deduce that the Ldim also provides a \emph{qualitative} characterization of realizable learnability. However, unlike the full-information feedback setting, the Ldim alone cannot provide matching lower and upper bounds on the minimax expected number of mistakes under apple tasting feedback. Indeed,  for the simple class of singletons over the natural numbers, $\mathcal{H}_{\text{sing}} := \{x \mapsto \mathbbm{1}\{x = a\}: a \in \mathbbm{N}\}$ \, we have that $\operatorname{L}(\Hcal_{\text{sing}}) = 1$ while  the minimax expected number of mistakes scales with the time horizon $T$ (see Section \ref{sec:lbrandomreal}). On the other hand, for the ``flip" of the singletons, $\mathcal{H} = \{x \mapsto \mathbbm{1}\{x \neq a\}: a \in \mathbbm{N}\}$, we also have that $\operatorname{L}(\Hcal) = 1$, but $\Hcal$ is trivially learnable in at most $1$ mistake in the realizable setting. Accordingly, new ideas are needed to handle the asymmetric nature of apple tasting feedback.  
 
 As a first step, we go beyond the symmetric nature of complete binary trees and define a new asymmetric binary tree called an \emph{apple tree}. In particular, a binary tree $\mathcal{T}$ of depth $d$ and width $w$ is an apple tree if it admits the following recursive structure. An apple tree of width $w \geq d$ is a complete binary tree with depth $d$. An apple tree with width $w = 1$ and depth $d$ is a degenerate binary tree of depth $d$ where every internal node has only a left child. An apple tree $\mathcal{T}(w, d)$ of depth $d$ and width $w < d$ has a root node $v$  whose left subtree is an apple tree $\mathcal{T}(w, d-1)$  and whose right subtree is an apple tree $\mathcal{T}(w-1, d-1)$. At a high-level, the width of an apple tree $w$ controls the number of ones any path starting from the root can have before the path ends. The depth $d$ of an apple tree controls the maximum number of zeros along any path starting from the root. From this perspective, one can alternatively construct an apple tree of width $w$ and depth $d$ by starting with a complete binary tree of depth $d$ and then trimming each path starting from the root node such that it ends once it contains $w$ ones or until a leaf node has been reached. 
 See Figure \ref{fig:at} for some examples of apple trees.

Similar to Littlestone trees, we can label the internal nodes of an apple tree with instances in $\mathcal{X}$ and the edges with elements of $\{0, 1\}$. By doing so, we get an Apple Littlestone (AL) tree. 

\begin{definition}[Apple Littlestone tree]
\noindent An \emph{Apple Littlestone tree} of width $w$ and depth $d$ is an \emph{apple} tree of width $w$ and depth $d$ where the internal nodes are labeled by instances of $\mathcal{X}$ and the left and right outgoing edges from each internal node are labeled by $0$ and $1$ respectively.
\end{definition}

\begin{figure}[t]
\includegraphics[width=15cm]{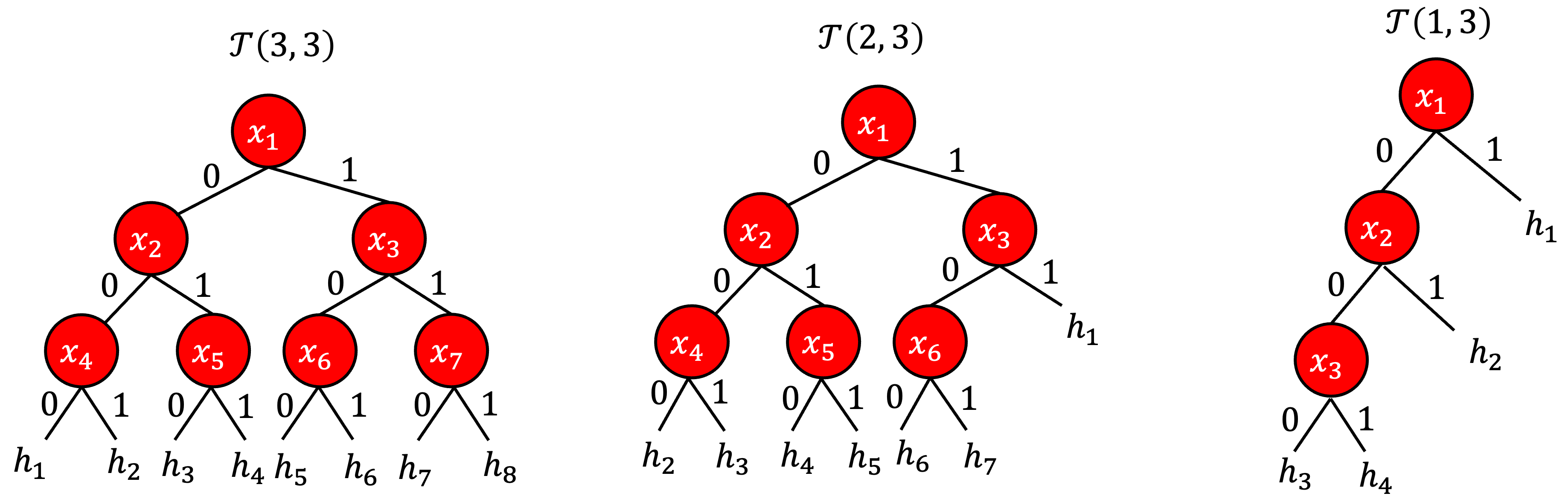}
\centering
\caption{Shattered Apple Littlestone trees of (width, depth):  $(3, 3)$ (left), $(2, 3)$ (middle), and $(1,3)$ (right).}
\label{fig:at}
\end{figure}

The notion of shattering for Littlestone trees extends exactly to AL trees. Formally, an AL tree $\mathcal{T}(w, d)$ of width $w$ and depth $d$ is shattered by $\mathcal{H}$, if for every path $\sigma$ down the tree $\mathcal{T}$, there exists a hypothesis $h_{\sigma} \in \mathcal{H}$ consistent with $\{(x_i, \sigma_i)\}_{i=1}^{|\sigma|}$. Note that, unlike Littlestone trees, AL trees are \emph{imbalanced}.  In fact, for an AL tree $\mathcal{T}$ of width $w$ and depth $d$, there can be at most $w$ ones along any valid path $\sigma$ down the tree before the path ends. Therefore, not all root-to-leaf paths are of the same length. Nevertheless, this notion of shattering is still well defined and naturally leads to a combinatorial dimension analogous to the Littlestone dimension.

\begin{definition}[Apple Littlestone dimension]
    \noindent The \emph{Apple Littlestone dimension} of $\mathcal{H}$ at width $w \in \mathbbm{N}$, denoted $\operatorname{AL}_{w}(\mathcal{H})$, is the largest $d$ such that there exists an apple tree $\mathcal{T}(w, d)$ of width $w$ and depth $d$ shattered by $\mathcal{H}$. If there exists shattered Apple Littlestone trees $\mathcal{T}$ with width $w$  of arbitrarily large depth, then we say that $\operatorname{AL}_{w}(\mathcal{H}) = \infty$. If there are no shattered apple trees $\mathcal{T}$ of width $w$, then we say that $\operatorname{AL}_{w}(\mathcal{H}) = 0$.
\end{definition}


In general, the value of $\operatorname{AL}_{w}(\mathcal{H})$ for $w \leq \operatorname{L}(\mathcal{H})$  can be much larger than $\operatorname{L}(\mathcal{H})$. For example, even for the class of singletons defined over $\mathbb{N}$, we have that $\operatorname{AL}_{1}(\mathcal{H}_{\text{sing}}) = \infty$ while $\operatorname{L}(\mathcal{H}_{\text{sing}}) = 1$. Accordingly, unlike the Ldim, the Apple Littlestone dimension (ALdim), does not provide a qualitative characterization of learnability. Instead, using the ALdim, we define a new combinatorial parameter termed the Effective width. In Section \ref{sec:real}, we show that the Effective width provides a tight \emph{quantitative characterization} of realizable learnability under apple tasting feedback.

\begin{definition}[Effective width]
\noindent The \emph{Effective width} of a hypothesis class $\mathcal{H}$, denoted  $\operatorname{W}(\mathcal{H})$, is the smallest $w \in \mathbbm{N}$ such that $\operatorname{AL}_{w}(\mathcal{H}) < \infty$. If there is no $w \in \mathbbm{N}$ such that $\operatorname{AL}_{w}(\mathcal{H}) < \infty$, then we say that $\operatorname{W}(\mathcal{H}) = \infty$.  
\end{definition}

\noindent The following lemma, whose proof is in Appendix \ref{app:struct}, establishes important properties of  $\operatorname{AL}_w(\Hcal)$ and $\operatorname{W}(\Hcal)$ that we use to  characterize learnability. 


\begin{lemma}[Structural Properties] \label{lem:struct} For every $\Hcal \subseteq \{0, 1\}^{\Xcal}$, the following statements are true.
\begin{enumerate}
\item[(i)] $\operatorname{AL}_{w_1}(\Hcal) \geq \operatorname{AL}_{w_2}(\Hcal)$ for all $w_1 < w_2.$
\item[(ii)] $\operatorname{AL}_{w}(\mathcal{H}) \geq \min\{w, \operatorname{L}(\mathcal{H})\}.$
\item[(iii)] $\operatorname{AL}_{w}(\Hcal) = \operatorname{L}(\mathcal{H})$ for all $w \geq \operatorname{L}(\Hcal) + 1$ when $\operatorname{L}(\Hcal) < \infty.$
\item[(iv)] $\operatorname{W}(\Hcal) \leq \operatorname{L}(\Hcal) + 1$ when $\operatorname{L}(\Hcal) < \infty.$
\item[(v)] $\operatorname{W}(\mathcal{H}) < \infty \Longleftrightarrow \operatorname{L}(\mathcal{H}) < \infty.$
\end{enumerate}
\end{lemma}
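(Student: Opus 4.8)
The plan is to deduce all five statements from two elementary structural facts about apple trees, and then finish by bookkeeping.

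First I would make precise the ``trimming'' description from the text: the apple tree $\mathcal{T}(w,d)$ is exactly the tree obtained from the complete binary tree of depth $d$ by cutting every root-to-leaf path off immediately below the edge on which it accumulates its $w$-th $1$. This follows by induction on $d$, matching the recursion in the definition (root with left subtree $\mathcal{T}(w,d-1)$ and right subtree $\mathcal{T}(w-1,d-1)$) against the recursion for trimming (trim the two depth-$(d-1)$ subtrees at $w$ and at $w-1$ ones), with the boundary cases $w\ge d$ and $w=1$ checked by hand. The one consequence I need repeatedly: when $w\ge d$, no path of length $d$ can contain $w$ ones, so $\mathcal{T}(w,d)$ is simply the complete binary tree of depth $d$; in particular $\mathcal{T}(w,w)$ is a depth-$w$ Littlestone tree.

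Second I would prove a monotonicity-of-shattering lemma: if $\mathcal{H}$ shatters an AL tree of width $w_2$ and depth $d$, and $w_1\le w_2$, then $\mathcal{H}$ shatters the AL tree of width $w_1$ and depth $d$ obtained by further trimming (keeping the induced node and edge labels). The reason is that every root-to-leaf path $\sigma$ of the width-$w_1$ tree is a prefix of the root-to-leaf path $\sigma\,0^{\,d-|\sigma|}$ of the width-$w_2$ tree --- appending $0$-edges adds no new $1$s, so it cannot trigger the $w_2$-ones cut and just runs the path down to depth $d$ --- and any $h\in\mathcal{H}$ consistent with the labeled examples along the longer path is consistent with those along $\sigma$. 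Here one must check that $\sigma$ really is a legal path of the width-$w_2$ tree, which holds because every strict prefix of $\sigma$ has fewer than $w_1\le w_2$ ones and length less than $d$. Specializing $w_2$ large makes the source tree an ordinary Littlestone tree.

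Given these tools the parts are immediate. Part (i) is the monotonicity lemma. For (ii), set $m=\min\{w,\operatorname{L}(\mathcal{H})\}$, which is finite since $w$ is; a shattered Littlestone tree of depth $m$ exists, and since $w\ge m$ it is a shattered $\mathcal{T}(w,m)$, so $\operatorname{AL}_w(\mathcal{H})\ge m$. For (iii), $\operatorname{AL}_w(\mathcal{H})\ge \operatorname{L}(\mathcal{H})$ by (ii) since $w\ge \operatorname{L}(\mathcal{H})+1$, while a shattered $\mathcal{T}(w,\operatorname{L}(\mathcal{H})+1)$ with $w\ge \operatorname{L}(\mathcal{H})+1$ would be a shattered complete binary tree of depth $\operatorname{L}(\mathcal{H})+1$, contradicting the value of $\operatorname{L}(\mathcal{H})$; hence equality. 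Part (iv) is then just $\operatorname{AL}_{\operatorname{L}(\mathcal{H})+1}(\mathcal{H})=\operatorname{L}(\mathcal{H})<\infty$, so $\operatorname{W}(\mathcal{H})\le \operatorname{L}(\mathcal{H})+1$. For (v): the implication $\operatorname{L}(\mathcal{H})<\infty\Rightarrow\operatorname{W}(\mathcal{H})<\infty$ is exactly (iv); for the converse I argue contrapositively --- if $\operatorname{L}(\mathcal{H})=\infty$, then for every width $w$ and every depth $d$ there is a shattered Littlestone tree of depth $d$, which the monotonicity lemma trims to a shattered $\mathcal{T}(w,d)$, forcing $\operatorname{AL}_w(\mathcal{H})=\infty$ for all $w$, so $\operatorname{W}(\mathcal{H})=\infty$.

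I do not anticipate a genuine difficulty; the only care needed is in the two structural sub-lemmas --- getting the boundary cases of the apple-tree recursion right ($w\ge d$ and $w=1$), settling the harmless convention about whether leaves count as children, and verifying that a root-to-leaf path of a more heavily trimmed apple tree is literally a prefix of a root-to-leaf path of the less heavily trimmed one rather than merely a walk that exits the tree.
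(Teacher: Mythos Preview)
Your proposal is correct and follows essentially the same approach as the paper: both proofs hinge on the trimming observation (a shattered AL tree of width $w_2$ can be pruned to a shattered AL tree of any width $w_1\le w_2$ at the same depth) together with the fact that $\mathcal{T}(w,d)$ is a complete binary tree whenever $w\ge d$, and then derive (ii)--(v) by straightforward bookkeeping. The only cosmetic difference is that you package the two structural facts as explicit sub-lemmas up front and handle (ii) in a single case via $m=\min\{w,\operatorname{L}(\mathcal{H})\}$, whereas the paper argues more tersely and splits (ii) into the cases $w\le\operatorname{L}(\mathcal{H})$ and $w>\operatorname{L}(\mathcal{H})$.
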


Property (iv) can be tight in the sense that for the class of singletons, $\operatorname{W}(\mathcal{H}_{\text{sing}}) = 2$ while $\operatorname{L}(\Hcal_{\text{sing}}) = 1$. Moreover, one cannot hope to  lower bound $\operatorname{W}(\mathcal{H})$ in terms of $\operatorname{L}(\mathcal{H})$. Indeed, for any \emph{finite} hypothesis class $\mathcal{H}$, we have that $\operatorname{W}(\mathcal{H}) = 1$ while $\operatorname{L}(\mathcal{H})$ can be made arbitrarily large. Finally, as an example, we also compute the Effective width for the $k$-wise generalization of $\Hcal_{\text{sing}}$ in Appendix \ref{app:examples}.

\section{Realizable Learnability} \label{sec:real}

In this section, we revisit the learnability of apple tasting in the realizable setting, first studied by \cite{helmbold2000apple}. Our main result is Theorem \ref{thm:real}, which lower- and upper bounds the minimax expected number of mistakes in terms of the Littlestone dimension and the Effective width.  

\begin{theorem}[Realizable Learnability] 
\label{thm:real}
\noindent For any hypothesis class $\mathcal{H} \subseteq \{0, 1\}^{\mathcal{X}}$, 
$$ \frac{1}{8} \min\Bigl\{\max\Bigl\{\sqrt{(\operatorname{W}(\mathcal{H})-1) \, T}, \operatorname{L}(\mathcal{H})\Bigl\}, T\Bigl\} \leq \inf_{\mathcal{A}}\operatorname{M}_{\mathcal{A}}(T, \mathcal{H})\leq \operatorname{AL}_{\operatorname{W}(\mathcal{H})}(\mathcal{H})  + 2\sqrt{(\operatorname{W}(\mathcal{H})-1)T}.$$
\end{theorem}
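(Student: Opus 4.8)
The statement is a two-sided bound, so I would prove the upper and lower bounds separately.

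\textbf{Upper bound.} The plan is to first build a deterministic learner $\mathcal{B}$ in the \emph{full-information} realizable setting that makes at most $\operatorname{AL}_{\operatorname{W}(\mathcal{H})}(\mathcal{H})$ false positives and at most $\operatorname{W}(\mathcal{H})-1$ false negatives on every realizable stream, and then invoke the black-box conversion of \cite{helmbold2000apple} (turning such a $\mathcal{B}$ into a randomized apple-tasting learner with expected mistakes at most $M_+ + 2\sqrt{T M_-}$) with $M_+ = \operatorname{AL}_{\operatorname{W}(\mathcal{H})}(\mathcal{H})$ and $M_- = \operatorname{W}(\mathcal{H})-1$, which yields exactly the claimed right-hand side. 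To construct $\mathcal{B}$ I would run an SOA-style algorithm maintaining the version space $V$ (initially $\mathcal{H}$) together with a width parameter $k$ (initially $\operatorname{W}(\mathcal{H})$) under the invariant $\operatorname{AL}_k(V) < \infty$: on instance $x_t$, writing $V^0, V^1$ for the restrictions of $V$ to hypotheses predicting $0$ resp.\ $1$ on $x_t$, predict $1$ if $\operatorname{AL}_k(V^0) \le \operatorname{AL}_{k-1}(V^1)$ and predict $0$ otherwise; upon seeing $y_t$, pass to the consistent restriction of $V$ and, if a false negative occurred, decrement $k$. The structural fact I would need, read off from the recursive definition of apple trees, is $\operatorname{AL}_k(V) \ge 1 + \max_{x}\min\{\operatorname{AL}_k(V^0_x), \operatorname{AL}_{k-1}(V^1_x)\}$; together with monotonicity of $\operatorname{AL}_k(\cdot)$ under passing to subclasses, this shows the potential $\Phi_t := \operatorname{AL}_{k_t}(V_t)$ is nonnegative, nonincreasing, and drops by at least one on \emph{every} mistake, giving $M_+ \le \Phi_1 = \operatorname{AL}_{\operatorname{W}(\mathcal{H})}(\mathcal{H})$; separately, each false negative strictly decreases $k\ge 1$ from its initial value $\operatorname{W}(\mathcal{H})$, giving $M_- \le \operatorname{W}(\mathcal{H})-1$. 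The one place needing care is the base case $k=1$, where width-$1$ apple trees are all-left so the ``$V^1$ branch'' must be handled separately (e.g.\ always predicting $1$ once $k=1$); this is routine.

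\textbf{Lower bound.} Using the identity $\max\{\min\{a,c\},\min\{b,c\}\} = \min\{\max\{a,b\},c\}$, it suffices to prove, for every randomized apple-tasting learner $\mathcal{A}$, the two separate bounds $\operatorname{M}_{\mathcal{A}}(T,\mathcal{H}) \ge \tfrac18\min\{\operatorname{L}(\mathcal{H}),T\}$ and $\operatorname{M}_{\mathcal{A}}(T,\mathcal{H}) \ge \tfrac18\min\{\sqrt{(\operatorname{W}(\mathcal{H})-1)T},T\}$. The first is the standard Littlestone argument: the adversary walks down a Littlestone tree shattered by $\mathcal{H}$ of depth $\min\{\operatorname{L}(\mathcal{H}),T\}$, at each internal node picking the edge on which the learner's prediction is correct with probability at most $\tfrac12$, forcing expected loss at least $\tfrac12$ per round while keeping the stream realizable; apple-tasting feedback only withholds information, so this goes through verbatim. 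For the second bound — the genuinely new part — I would use that $\operatorname{AL}_{\operatorname{W}(\mathcal{H})-1}(\mathcal{H}) = \infty$ whenever $\operatorname{W}(\mathcal{H})\ge 2$ (the bound is vacuous when $\operatorname{W}(\mathcal{H})=1$, and the $\operatorname{L}$-term already handles $\operatorname{W}(\mathcal{H})=\infty$), so there is an Apple Littlestone tree of width $w := \operatorname{W}(\mathcal{H})-1$ and depth $\ge T$ shattered by $\mathcal{H}$. The adversary descends this tree while repeating each visited node $n \approx \sqrt{T/w}$ times in a row: at a node, if the learner's average probability of predicting $1$ over those $n$ rounds is below $\tfrac12$ and a ``$1$-budget'' remains (at most $w$ right-turns on any root-to-leaf path), the adversary takes the right edge and labels the block $1$, charging $\ge n/2$ false negatives \emph{with no feedback to the learner}; otherwise it takes the left edge, labels the block $0$, and charges $\ge n\cdot(\text{avg probability of predicting }1)$ false positives. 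Since there are $\approx T/n \approx \sqrt{wT}$ nodes along the left spine, the learner can neither keep its per-node $1$-probabilities uniformly small (else it absorbs $\approx w\cdot n/2 \approx \sqrt{wT}$ false negatives across the $\le w$ right-turns) nor keep them large on enough nodes (else it absorbs $\gtrsim \sqrt{wT}$ false positives on the spine), and a case analysis forces $\Omega(\sqrt{(\operatorname{W}(\mathcal{H})-1)T})$ expected mistakes; tracking constants gives the factor $\tfrac18$. Realizability holds throughout because each visited node lies on a root-to-leaf path of the shattered Apple Littlestone tree.

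\textbf{Main obstacle.} The hardest step is the lower bound's apple-tree construction: one must randomize the adversary's path so the learner genuinely cannot detect when the $1$-budget is exhausted, then choose the block length $n$ and the number of blocks to balance the false-positive and false-negative contributions, and finally extract a clean $\min\{\cdot,T\}$ and the stated constant from this balancing. By comparison, the full-information learner and the potential argument behind the upper bound are routine once the recursion for $\operatorname{AL}_k$ and its subclass-monotonicity are established.
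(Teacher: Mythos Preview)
Your proposal is essentially the paper's approach: an SOA-style full-information learner with $\operatorname{AL}_{k_t}(V_t)$ as potential, followed by the \cite{helmbold2000apple} conversion, for the upper bound; and a block-repetition descent of a shattered width-$(\operatorname{W}(\mathcal{H})-1)$ Apple Littlestone tree, together with the standard Littlestone-tree lower bound for the $\operatorname{L}(\mathcal{H})$ term, for the lower bound. Two small remarks. First, the paper's prediction rule compares $\operatorname{AL}_{w_t}(V_t^0)$ to $\operatorname{AL}_{w_t}(V_t)$ (at the \emph{same} width) rather than to $\operatorname{AL}_{w_t-1}(V_t^1)$; this makes the $k=1$ edge case you flagged disappear automatically, and its potential drops only on false positives (false negatives are bounded separately), which is slightly cleaner than your ``drops on every mistake'' accounting. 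Second, contrary to your ``main obstacle'', no randomization of the adversary's path is needed: the paper's adversary is fully deterministic, setting $\sigma_j^\star = 0$ or $1$ according to whether the probability that the learner predicts $1$ \emph{at least once} in block $j$ (a quantity that is well-defined independently of the yet-unchosen label, since $0$-predictions yield no feedback) exceeds $\tfrac12$. This threshold is both simpler and tighter than the ``average probability of predicting $1$'' you used, and the subsequent case analysis on the number of ones in $\sigma^\star$ is what yields the constant $\tfrac18$.
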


The lower and upper bounds of Theorem \ref{thm:real} can be tight up to constant factors. There are two cases to consider.  When $\operatorname{W}(\mathcal{H}) = 1$, the lower and upper bounds in Theorem \ref{thm:real} reduce to  $\frac{\operatorname{L}(\mathcal{H}) }{8} \leq \inf_{\mathcal{A}}\operatorname{M}_{\mathcal{A}}(T, \mathcal{H})\leq \operatorname{AL}_{1}(\mathcal{H})$ for $T \geq \operatorname{L}(\mathcal{H})$. Taking $|\mathcal{X}| = d < \infty$ and $\mathcal{H} = \{0, 1\}^{\mathcal{X}}$ gives that $\operatorname{L}(\mathcal{H}) = \operatorname{AL}_{1}(\mathcal{H}) = d$, ultimately implying that the lower- and upper bounds can be off by only a constant factor of $\frac{1}{8}$. Secondly, consider the case where $\operatorname{W}(\mathcal{H}) \geq 2$. Then, if $T \geq \max \{\operatorname{W}(\mathcal{H}) - 1, \operatorname{AL}_{\operatorname{W}(\mathcal{H})}^2(\mathcal{H})\}$, Theorem \ref{thm:real} implies that $ \frac{1}{8} \sqrt{(\operatorname{W}(\mathcal{H})-1) \, T} \leq \inf_{\mathcal{A}}\operatorname{M}_{\mathcal{A}}(T, \mathcal{H})\leq 3\sqrt{(\operatorname{W}(\mathcal{H})-1)T}$, showing that the upper- and lower bounds are off only by a constant factor. 

 Theorem \ref{thm:real} implies that when $\operatorname{W}(\mathcal{H}) = 1$, a constant upper bound on the expected regret is possible. In fact, when $\operatorname{AL}_1(\mathcal{H}) < \infty$, there exists a \emph{deterministic} online learner which makes at most $\operatorname{AL}_1(\mathcal{H}) $ mistakes in the realizable setting under apple tasting feedback (see Appendix \ref{sec:updetreal}). On the other hand, Theorem \ref{thm:real} also shows that, in full generality, it is not possible to achieve a constant \emph{expected} mistake bound under apple tasting feedback in the \emph{realizable} setting. Indeed, if $\operatorname{W}(\mathcal{H}) > 1$, then the worst-case expected mistakes of any  randomized learner, under apple tasting feedback, is at least $\Omega(\sqrt{T})$. This is in contrast to the full-information setting, where the minimax expected number of mistakes in the realizable setting is constant, and that too achieved by a \emph{deterministic} learner (i.e SOA). Accordingly, Theorem \ref{thm:real} gives a trichotomy in the minimax expected number of mistakes for the realizable setting. 

\begin{corollary} [Trichotomy in minimax expected number of mistakes] \label{cor:trichotomy}
\noindent For any hypothesis class $\mathcal{H} \subseteq \{0, 1\}^{\Xcal}$, 
$$
\inf_{\mathcal{A}} \operatorname{M}_{\mathcal{A}}(T, \mathcal{H})=\begin{cases}
			\Theta(1), & \text{if $\operatorname{W}(\mathcal{H}) = 1.$}\\
            \Theta(\sqrt{T}) & \text{if $2 \leq \operatorname{W}(\mathcal{H}) < \infty.$}\\
            \Theta(T), & \operatorname{W}(\mathcal{H}) =  \infty.
		 \end{cases}
$$
\end{corollary}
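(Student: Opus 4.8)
The plan is to obtain the trichotomy as a direct consequence of Theorem~\ref{thm:real}, via a case split on the value of $\operatorname{W}(\mathcal{H})$. By Lemma~\ref{lem:struct}(v), $\operatorname{W}(\mathcal{H}) < \infty$ if and only if $\operatorname{L}(\mathcal{H}) < \infty$, so the three regimes $\operatorname{W}(\mathcal{H}) = 1$, $2 \le \operatorname{W}(\mathcal{H}) < \infty$, and $\operatorname{W}(\mathcal{H}) = \infty$ exhaust all hypothesis classes; in each one we pin down $\inf_{\mathcal{A}}\operatorname{M}_{\mathcal{A}}(T,\mathcal{H})$ up to constants in $T$. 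The one structural fact used throughout is that, by definition of the Effective width, $\operatorname{AL}_{\operatorname{W}(\mathcal{H})}(\mathcal{H})$ is finite whenever $\operatorname{W}(\mathcal{H}) < \infty$, so the additive term $\operatorname{AL}_{\operatorname{W}(\mathcal{H})}(\mathcal{H})$ appearing in the upper bound of Theorem~\ref{thm:real} is a constant independent of $T$.

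When $\operatorname{W}(\mathcal{H}) = 1$, the upper bound of Theorem~\ref{thm:real} becomes $\operatorname{AL}_{1}(\mathcal{H}) + 2\sqrt{0\cdot T} = \operatorname{AL}_{1}(\mathcal{H}) = O(1)$, while for $T \ge \operatorname{L}(\mathcal{H})$ the lower bound is $\tfrac{1}{8}\min\{\operatorname{L}(\mathcal{H}),T\} = \tfrac{1}{8}\operatorname{L}(\mathcal{H})$, a positive constant whenever $\mathcal{H}$ contains two distinct functions (the degenerate case being trivially $0$); the two bounds are mutually consistent because $\operatorname{L}(\mathcal{H}) \le \operatorname{AL}_{1}(\mathcal{H})$ by Lemma~\ref{lem:struct}(i),(iii), and this yields $\Theta(1)$. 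When $2 \le \operatorname{W}(\mathcal{H}) < \infty$, we have $\operatorname{W}(\mathcal{H})-1 \ge 1$; for $T \ge \operatorname{AL}_{\operatorname{W}(\mathcal{H})}^{2}(\mathcal{H})$ the additive constant in the upper bound is at most $\sqrt{(\operatorname{W}(\mathcal{H})-1)T}$, so the upper bound is $O(\sqrt{T})$, and for $T \ge \max\{\operatorname{W}(\mathcal{H})-1,\, \operatorname{L}(\mathcal{H})^{2}/(\operatorname{W}(\mathcal{H})-1)\}$ the nested $\min/\max$ in the lower bound collapses to $\tfrac{1}{8}\sqrt{(\operatorname{W}(\mathcal{H})-1)T}$, giving $\Omega(\sqrt{T})$; hence $\Theta(\sqrt{T})$. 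Finally, when $\operatorname{W}(\mathcal{H}) = \infty$, the trivial observation that no learner makes more than $T$ mistakes in $T$ rounds gives $O(T)$, and the lower bound of Theorem~\ref{thm:real} evaluates to $\tfrac{1}{8}\min\{\infty,T\} = \tfrac{T}{8}$ (equivalently, since $\operatorname{AL}_{w}(\mathcal{H}) = \infty$ for every $w$, the hard-stream construction behind Theorem~\ref{thm:real} forces $\Omega(T)$ mistakes), so $\Theta(T)$.

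I do not expect a genuine obstacle here: all of the substance lives in Theorem~\ref{thm:real}, and the corollary is a routine case analysis. The only points requiring a bit of care are that the bounds are asymptotic — the $\min$ and $\max$ in the lower bound of Theorem~\ref{thm:real} only simplify once $T$ is large relative to $\operatorname{L}(\mathcal{H})$ and $\operatorname{W}(\mathcal{H})$, which is harmless for a $\Theta(\cdot)$ statement — and the correct handling of the $\infty$ regimes, where Lemma~\ref{lem:struct}(v) is the bookkeeping device that makes the three cases exhaustive and guarantees $\operatorname{L}(\mathcal{H})<\infty$ exactly when $\operatorname{W}(\mathcal{H})<\infty$.
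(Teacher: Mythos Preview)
Your proposal is correct and matches the paper's approach: the paper likewise derives the trichotomy directly from the two-sided bound in Theorem~\ref{thm:real} by a case split on $\operatorname{W}(\mathcal{H})$, with the same simplifications of the $\min/\max$ once $T$ is large relative to the class parameters. The only cosmetic difference is that for the $\Omega(\sqrt{T})$ lower bound when $2\le\operatorname{W}(\mathcal{H})<\infty$ you could simply use $\max\{\sqrt{(\operatorname{W}(\mathcal{H})-1)T},\operatorname{L}(\mathcal{H})\}\ge\sqrt{(\operatorname{W}(\mathcal{H})-1)T}$ and then take $T\ge\operatorname{W}(\mathcal{H})-1$ to drop the outer $\min$ with $T$; the extra threshold $T\ge\operatorname{L}(\mathcal{H})^2/(\operatorname{W}(\mathcal{H})-1)$ is not needed.
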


In Section \ref{sec:agn}, we will show that $\inf_{\mathcal{A}}\operatorname{R}_{\mathcal{A}}(T, \mathcal{H}) = \tilde{\Theta}(\sqrt{T})$, where $\tilde{\Theta}$ hides poly-logarithmic factors in $T$. With this in mind, Corollary \ref{cor:trichotomy} shows that when $\operatorname{W}(\mathcal{H}) \geq 2$, realizable learnability under apple tasting feedback can be as hard as agnostic learnability under apple tasting feedback. Unfortunately, for many simple classes, like the singletons over $\mathbbm{N}$, we have $\operatorname{W}(\mathcal{H}) \geq 2$.  On the other hand, for classes containing hypothesis that rarely output $0$, like the ``flip'' of the class of singletons, realizable learnability under apple tasting feedback can be as easy as realizable learnability under full-information feedback.




\subsection{Upper Bounds for Randomized Learners in the Realizable Setting} \label{sec:uprandomreal}

We prove a slightly stronger upper bound than the one stated in Theorem \ref{thm:real}.

\begin{lemma}[Randomized Realizable Upper Bound] \label{lem:up}
\noindent For any hypothesis class $\mathcal{H} \subseteq \{0, 1\}^{\mathcal{X}}$,
$$\inf_{\mathcal{A}}\operatorname{M}_{\mathcal{A}}(T, \mathcal{H}) \leq \inf_{w \in \mathbb{N}} \left\{\operatorname{AL}_w(\mathcal{H}) + 2\sqrt{(w-1)T} \right\}.
%
$$
\end{lemma}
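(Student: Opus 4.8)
The plan is to fix an arbitrary $w \in \mathbb{N}$ and show $\inf_{\mathcal{A}}\operatorname{M}_{\mathcal{A}}(T,\mathcal{H}) \le \operatorname{AL}_w(\mathcal{H}) + 2\sqrt{(w-1)T}$; taking the infimum over $w$ then gives the claim. The first step is to build a \emph{deterministic} online learner in the \emph{full-information} realizable setting whose false-negative mistake count (predicting $0$ when the truth is $1$) is at most $w-1$ and whose false-positive mistake count (predicting $1$ when the truth is $0$) is at most $\operatorname{AL}_w(\mathcal{H})$. The natural candidate is an ``SOA-like'' algorithm driven by the Apple Littlestone dimension: maintaining a version space $V$ of hypotheses consistent with the observed labels, on instance $x_t$ the learner predicts $1$ exactly when $\operatorname{AL}_w(V|_{x_t=1})$ would not strictly drop relative to the current budget — more precisely, one predicts $0$ by default unless predicting $0$ is ``unsafe'' in the sense that the subtree forced by a $1$-labeled example still has full width-$w$ ALdim, and predicts $1$ only when doing so strictly decreases $\operatorname{AL}_w(V)$. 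The bookkeeping should be set up so that each false positive strictly decreases $\operatorname{AL}_w$ of the version space (bounding those mistakes by $\operatorname{AL}_w(\mathcal{H})$), while each false negative ``uses up'' one unit of width (bounding those by $w-1$), exactly mirroring the recursive definition of an apple tree where a left step preserves width and a right step decrements it.

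The second step is to invoke the conversion of \cite{helmbold2000apple}: given a deterministic full-information learner with false-positive bound $M_+$ and false-negative bound $M_-$, there is a randomized apple-tasting learner whose expected number of mistakes on any realizable stream is at most $M_+ + 2\sqrt{T M_-}$. Plugging in $M_+ = \operatorname{AL}_w(\mathcal{H})$ and $M_- = w-1$ yields the bound for this fixed $w$. I would need to state this conversion precisely (it works by occasionally ``exploring'' — predicting $1$ even when the base learner says $0$ — with a carefully tuned probability, so that false negatives of the base learner are caught, at the cost of $O(\sqrt{T M_-})$ exploration mistakes) and check that the realizability hypothesis is preserved through the simulation.

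The main obstacle is the first step: correctly defining the mistake-bounded deterministic learner and proving its two mistake bounds simultaneously. The subtlety is that the two budgets interact — one must argue that whenever the learner errs by predicting $0$, the consistent version space (now restricted to hypotheses with $h(x_t)=1$) still shatters an apple tree of width $w-1$ and the same depth, so the relevant invariant is on $\operatorname{AL}$ with a \emph{decremented width}, not a decremented depth; and whenever it errs by predicting $1$, the restriction to $h(x_t)=0$ must strictly decrease $\operatorname{AL}_w$. Getting the prediction rule exactly right so that \emph{both} invariants hold — i.e., the learner predicts $1$ only when it is genuinely forced to in order to keep the width budget from being needed — is where the real work lies, and this is presumably where the structural properties in Lemma \ref{lem:struct} (monotonicity in $w$ and the relation to $\operatorname{L}(\mathcal{H})$) get used to handle boundary cases such as $w=1$ (where the learner should simply always predict $0$ until forced, matching $\operatorname{AL}_1$) and $w \ge \operatorname{L}(\mathcal{H})+1$ (where it degenerates to the ordinary SOA).
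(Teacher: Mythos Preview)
Your proposal is correct and matches the paper's approach exactly: the paper proves Lemma \ref{lem:up} by composing Lemma \ref{lem:soaconstrained} (a deterministic full-information learner with at most $w-1$ false negatives and at most $\operatorname{AL}_w(\mathcal{H})$ false positives, built via an SOA-like rule that maintains a width parameter $w_t$ decremented on each false negative) with Lemma \ref{lem:conver} (the \cite{helmbold2000apple} conversion). The paper's prediction rule is to output $1$ precisely when $\operatorname{AL}_{w_t}(V_t^0) < \operatorname{AL}_{w_t}(V_t)$, and the two invariants it verifies are exactly the ones you identify; Lemma \ref{lem:struct} is not actually needed for the boundary cases.
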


The upper bound in Theorem \ref{thm:real} follows by picking $w = \operatorname{W}(\mathcal{H})$.  If one picks $w = \operatorname{L}(\mathcal{H})+1$, then $\operatorname{AL}_w(\mathcal{H}) = \operatorname{L}(\mathcal{H})$ and we get an upper bound of $3\sqrt{\operatorname{L}(\mathcal{H})T}$ on the expected mistakes.  

Lemma \ref{lem:up} follows from composing the next two lemmas. Lemma \ref{lem:soaconstrained} shows that if $\operatorname{AL}_w(\mathcal{H}) < \infty$, then there exists a deterministic online learner, under \emph{full-information} feedback, that makes at most $w-1$ false negative mistakes and at most $\operatorname{AL}_w(\mathcal{H})$ false positive mistakes. Lemma \ref{lem:conver} is from \cite{helmbold2000apple} and shows how to convert any online learner under full-information feedback into an online learner under apple tasting feedback. 

\begin{lemma} \label{lem:soaconstrained}
\noindent For any $\mathcal{H} \subset \{0, 1\}^{\Xcal}$ and $w \in \mathbbm{N}$ such that $\operatorname{AL}_w(\mathcal{H}) < \infty$, there exists a deterministic online learner which, under \emph{full-information} feedback, makes at most $w-1$ \emph{false negative} mistakes and at most $\operatorname{AL}_w(\mathcal{H})$ \emph{false positive} mistakes in the realizable setting. 
\end{lemma}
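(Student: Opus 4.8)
The plan is to design a deterministic online learner in the full-information setting in the spirit of Littlestone's SOA, but using the Apple Littlestone dimension $\operatorname{AL}_w$ as the potential function, and to predict greedily so that false negatives and false positives are charged to different budgets. Maintain the usual version space $V_t \subseteq \mathcal{H}$ of hypotheses consistent with all examples seen so far, initialized to $V_1 = \mathcal{H}$. On round $t$, upon receiving $x_t$, consider the two restricted version spaces $V_t^{b} := \{h \in V_t : h(x_t) = b\}$ for $b \in \{0,1\}$. Track a running width budget $w_t$, initialized to $w_1 = w$, which records how many more ``$1$''-branches we can afford; the learner predicts $\hat y_t = 1$ only when doing so is ``safe,'' i.e. when it will not create an AL tree that is too deep.

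The prediction rule I would use is: predict $\hat y_t = 0$ \emph{unless} $\operatorname{AL}_{w_t}(V_t^{0}) < \operatorname{AL}_{w_t}(V_t)$ fails to leave room on the $0$-side, more precisely predict $\hat y_t = 1$ iff $\operatorname{AL}_{w_t - 1}(V_t^{1}) < \operatorname{AL}_{w_t}(V_t^{0})$ is the branch that keeps the potential from being forced up — the exact inequality will need to be chosen so that the recursive structure of apple trees (root with left subtree $\mathcal{T}(w,d-1)$ and right subtree $\mathcal{T}(w-1,d-1)$) matches. The key combinatorial fact I will prove and exploit is: for any $V$ and any $x$, one cannot simultaneously have $\operatorname{AL}_{w}(V^{0}) \geq \operatorname{AL}_{w}(V)$ and $\operatorname{AL}_{w-1}(V^{1}) \geq \operatorname{AL}_{w}(V)$ (when $\operatorname{AL}_w(V) < \infty$), since otherwise we could glue the two shattered apple trees under $x$ to get a shattered apple tree of width $w$ and depth $\operatorname{AL}_w(V)+1$, a contradiction. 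Thus at least one branch strictly decreases the appropriate dimension.

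With that fact in hand, the learner predicts toward the branch that does \emph{not} strictly decrease the dimension on its own side — i.e. it predicts $\hat y_t = 0$ if $\operatorname{AL}_{w_t}(V_t^{0}) = \operatorname{AL}_{w_t}(V_t)$, and predicts $\hat y_t = 1$ otherwise (in which case $\operatorname{AL}_{w_t - 1}(V_t^{1})$ must be strictly smaller than $\operatorname{AL}_{w_t}(V_t)$, using the gluing fact, and also $\operatorname{AL}_{w_t}(V_t^0) < \operatorname{AL}_{w_t}(V_t)$). Then I analyze the two mistake types separately. A false positive ($\hat y_t = 1$, $y_t = 0$) means the true hypothesis lay in $V_t^0$, so $V_{t+1} = V_t^0$ and $\operatorname{AL}_{w_t}(V_{t+1}) < \operatorname{AL}_{w_t}(V_t)$ while $w_{t+1} = w_t$; since $\operatorname{AL}_{w}$ is a nonnegative integer, there can be at most $\operatorname{AL}_w(\mathcal{H})$ such rounds. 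A false negative ($\hat y_t = 0$, $y_t = 1$) happens only when $\operatorname{AL}_{w_t}(V_t^0) = \operatorname{AL}_{w_t}(V_t)$; then $V_{t+1} = V_t^1$, and we decrement $w_{t+1} = w_t - 1$. Each false negative lowers the width budget by one, so once $w_t$ reaches $1$ no further false negatives are possible (an apple tree of width $1$ forces the learner to predict $0$ correctly forever on the realizable stream, because $\operatorname{AL}_0 \equiv 0$ blocks the $1$-branch). Hence at most $w - 1$ false negatives. I also need to check that correct predictions never increase the relevant potential, which is immediate since $V_{t+1} \subseteq V_t$ and $w_{t+1} \leq w_t$ together with monotonicity properties (i) of Lemma~\ref{lem:struct}.

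The main obstacle I anticipate is pinning down the exact dimension inequality in the prediction rule so that the gluing argument is airtight — in particular handling the boundary cases $w_t = 1$ and $\operatorname{AL}_{w_t}(V_t) = 0$, and making sure the ``at least one branch strictly decreases'' claim is stated with the right widths ($w$ on the left, $w-1$ on the right) to mirror the recursive definition $\mathcal{T}(w,d) = (\mathcal{T}(w,d-1), \mathcal{T}(w-1,d-1))$. Everything else is a clean potential-decrease bookkeeping argument analogous to the SOA analysis.
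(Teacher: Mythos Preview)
Your proposal is essentially the paper's proof. The algorithm you settle on --- maintain $V_t$ and a width budget $w_t$, predict $\hat y_t = 1$ iff $\operatorname{AL}_{w_t}(V_t^0) < \operatorname{AL}_{w_t}(V_t)$, and decrement $w_t$ on a false negative --- is exactly Algorithm~\ref{alg:real_full}; your gluing fact (both $\operatorname{AL}_w(V^0)\ge \operatorname{AL}_w(V)$ and $\operatorname{AL}_{w-1}(V^1)\ge \operatorname{AL}_w(V)$ would let you hang the two subtrees under $x$ to get a width-$w$ tree of depth $\operatorname{AL}_w(V)+1$) is precisely the contradiction the paper invokes in both the false-positive telescoping step and the ``no $w$-th false negative'' step. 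One small caution: your line ``once $w_t$ reaches $1$ no further false negatives are possible \ldots because $\operatorname{AL}_0 \equiv 0$ blocks the $1$-branch'' is not quite the right justification --- what actually rules out a false negative at $w_t=1$ is your gluing fact specialized to $w=1$, where the right ``width-$0$ subtree'' is just a single leaf witnessed by any $h\in V_t^1$; this is exactly how the paper handles it, and it also requires the trivial case $V_t(x_t)=\{y\}$ to be split off so that both $V_t^0,V_t^1$ are nonempty when you glue.
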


\begin{proof}
Suppose $w \in \mathbb{N}$ such that $ \operatorname{AL}_w(\mathcal{H}) < \infty$ and denote $\mathcal{A}$ to be Algorithm \ref{alg:real_full}.

\begin{algorithm}
\setcounter{AlgoLine}{0}
\caption{Realizable Algorithm Under Full-Information Feedback}\label{alg:real_full}
\KwIn{$V_1 = \mathcal{H}$ , pick $w_1 = w$ such that $\operatorname{AL}_{w}(\mathcal{H}) < \infty$}

\For{$t = 1,...,T$} {
    Receive $x_t$.

     For  each $y \in \{0,1\} $, define $V_{t}^{y} = \{h \in V_{t} \, \mid h(x_t) = y\}$. 

     \uIf{$V_t(x_t) = \{y\}$}{

        Predict $\hat{y}_t = y$.
     
     }

     \uElse{
         If $|V_{t}^1| \geq 1$, and  $\operatorname{AL}_{w_{t}}(V_{t}^0) < \operatorname{AL}_{w_{t}}(V_{t})$, predict $\hat{y}_t = 1$. Otherwise, predict $\hat{y}_t = 0$.
     }

    Receive $y_t$ and update $V_t \leftarrow V_t^{y_t}$. 
    
    If $\hat{y}_t =0$ and $y_t =1$, then update $w_{t+1} \leftarrow w_{t}-1$. Else, set $w_{t+1} \leftarrow w_{t}$. 
     
}
\end{algorithm}

Let $(x_1, y_1), ..., (x_T, y_T)$ be the stream to be observed by $\mathcal{A}$. We show that $\mathcal{A}$, initialized at $w_1 =w$, makes at most $\operatorname{AL}_w(\mathcal{H})$ false positive mistakes and at most $w-1$ false negative mistakes.  Let $S_{+} = \{ t \in [T] \mid \hat{y}_t =1 \, \text{ and }\,  y_t =0 \}$ be the set of time points where $\mathcal{A}$ makes false positive mistakes, and $S_{-} = \{ t \in [T] \mid \hat{y}_t =0 \, \text{ and }\,  y_t =1 \}$ be the set of time points where $\mathcal{A}$  makes false negative mistakes. We show $ |S_{+}| \leq \operatorname{AL}_w(\mathcal{H})$ by first establishing
\begin{equation}\label{eq:falsepos}
    \operatorname{AL}_{w_{t+1}}(V_{t+1}) \leq  \operatorname{AL}_{w_{t}}(V_{t}) - \indicator\{t \in S_{+}\}, \quad \quad \forall t \in [T].
\end{equation}
This inequality then implies that the number of false positive mistakes of $\mathcal{A}$ is
\begin{align*}
\sum_{t=1}^T \indicator\{t \in S_{+}\} &\leq \sum_{t=1}^{T} \left(\operatorname{AL}_{w_{t}}(V_{t}) - \operatorname{AL}_{w_{t+1}}(V_{t+1}) \right)  \\
&= \operatorname{AL}_{w_{1}}(V_{1}) - \operatorname{AL}_{w_{T+1}}(V_{T+1})\\
&\leq \operatorname{AL}_{w_{1}}(V_{1}) = \operatorname{AL}_{w}(\mathcal{H}).
\end{align*}
To prove inequality \eqref{eq:falsepos}, we consider the two cases:  $t \in S_{+}$ and  $t \notin S_{+}$. Suppose $t \in S_{+}$. Then, we know that $\hat{y}_t=1$ and by the prediction rule of $\mathcal{A}$, we must have $\operatorname{AL}_{w_{t}}(V_{t}^0) < \operatorname{AL}_{w_{t}}(V_{t})$. Since $y_t =0$, we further obtain that $V_{t+1} = V_t^0$ and $w_{t+1} = w_{t}$ in this case.  This yields $\operatorname{AL}_{w_{t+1}}(V_{t+1}) < \operatorname{AL}_{w_{t}}(V_{t}) $, which subsequently implies $\operatorname{AL}_{w_{t+1}}(V_{t+1}) \leq  \operatorname{AL}_{w_{t}}(V_{t}) - \indicator\{t \in S_{+}\} $.

Now, let us consider the case when $t \notin S_{+}$. In the case when $t \notin S_{+} \cup S_{-}$,  we have $w_{t+1} = w_t$ and $\indicator\{t \in S_{+}\}=0$. Thus, we trivially obtain
$ \operatorname{AL}_{w_{t+1}}(V_{t+1}) \leq  \operatorname{AL}_{w_{t}}(V_{t})- \indicator\{t \in S_{+}\}$ since $V_{t+1} \subseteq V_t$. Next, let us consider the case when $t \in S_{-}$. In this case, we have $w_{t+1} = w_{t}-1$,  $V_t = V_t^{1}$, and $\indicator\{t \in S_{+}\} =0$. Thus, to establish inequality \eqref{eq:falsepos}, it suffices to show that $\operatorname{AL}_{w_{t}-1}(V_{t}^1) \leq  \operatorname{AL}_{w_{t}}(V_{t})$. Suppose, for the sake of contradiction, this is not true and we instead have $\operatorname{AL}_{w_{t}-1}(V_{t}^1) >  \operatorname{AL}_{w_{t}}(V_{t})$. Let $ d:= \operatorname{AL}_{w_{t}}(V_{t})$. Note that $d > 0$ because there must exist $h_1, h_2 \in V_t$ such that $h_1(x_t) \neq h_2(x_t)$ or otherwise $\mathcal{A}$ would not have made a false negative mistake. Since $\operatorname{AL}_{w_{t}-1}(V_{t}^1) > d$, we are guaranteed the existence of an AL tree $\mathcal{T}_{1}(w_{t}-1, d)$ shattered by $V_t^1$. Furthermore, as $\hat{y}_t =0$ and $|V_{t}^1|\geq 1$, the prediction rule implies that $ \operatorname{AL}_{w_{t}}(V_{t}^0) \geq   \operatorname{AL}_{w_{t}}(V_{t}) = d$. Accordingly, we are also guaranteed the existence of an AL tree $\mathcal{T}_{0}(w_{t}, d)$ shattered by $V_t^0$. Now consider an AL tree $\mathcal{T}$ that has $x_t$ in its root-node, has a subtree $\mathcal{T}_{0}(w_{t}, d)$ attached to left-outgoing edge from the root-node and has a subtree $\mathcal{T}_{1}(w_{t}-1, d)$ attached to right-outgoing edge from the root-node. Since all hypotheses in $V_t^0$ output $0$ on $x_t$ and all hypotheses in $V_t^1$ output $1$ on $x_t$, the tree $\mathcal{T}$ shattered by $V_t$. Since $\mathcal{T}$ is a valid AL tree of width $w_t$ and depth $d+1$, we have that $  \operatorname{AL}_{w_{t}}(V_t) \geq d+1$, a contradiction to our assumption that $\operatorname{AL}_{w_{t}}(V_t) = d$. Therefore, we must have  $\operatorname{AL}_{w_{t}-1}(V_{t}^1) \leq  \operatorname{AL}_{w_{t}}(V_{t})$ when $t \in S_{-}.$

Next, we show that $\mathcal{A}$ makes at most $w-1$ false negative mistakes. Let $t^{\star} \in [T]$ be the time point where the algorithm makes its $(w-1)$-th false negative mistake. If such time point $t^{\star}$ does not exist, then we trivially have $|S_{-}| \leq w-2 < w-1$. We now consider the case when $t^{\star} \in [T]$ exists. It suffices to show that, $\forall t > t^{\star}$, we have $t \notin S_{-}$. Suppose, for the sake of contradiction, $\exists \, t > t^{\star}$ such that $t \in S_{-}$. Since $\hat{y}_t = 0$ and $y_t =1$, we must have $|V_t^1| \geq 1$. Thus, the prediction strategy implies that $\operatorname{AL}_{w_{t}}(V_{t}^0) \geq  \operatorname{AL}_{w_{t}}(V_{t}) $. Given that $t > t^{\star}$ and $\mathcal{A}$ has already made $w-1$ false negative mistakes, we must have $w_t = 1$. Thus, we have $\operatorname{AL}_{1}(V_{t}^0) \geq  \operatorname{AL}_{1}(V_{t}) =: d$. Note that $d \geq 1$ because there must exist $h_1, h_2 \in V_t$ such that $h_1(x_t) \neq h_2(x_t)$. Since $\operatorname{AL}_{1}(V_{t}^0)  \geq d$, we are guaranteed the existence of an AL tree $\mathcal{T}_0(1, d)$ of width $1$ and depth $d$ shattered by $V_t^0$. Next, consider a tree $\mathcal{T}$ with $x_t$ on the root node and has a subtree $\mathcal{T}_0(1, d)$ attached to the left-outgoing edge from the root node. Let $h \in V_t$ any hypothesis such that $h(x_t) =1 $. The hypothesis $h$ must exist because $|V_t^1| \geq 1$. By putting $h $ in the leaf node following the right-outgoing edge from the root node in $\mathcal{T}$, it is clear that $\mathcal{T}$ is a valid AL tree of width $1$ and depth $d+1$ shattered by $V_t$. The existence of $\mathcal{T}$ implies that  $ \operatorname{AL}_{1}(V_{t}) \geq d+1$, a contradiction to our assumption $\operatorname{AL}_{1}(V_{t}) =d $. Thus,  $\forall t> t^{\star}$, we have $t \notin S_{-}$. Therefore, $\mathcal{A}$ makes no more than $w-1$ false negative mistakes. \end{proof}

We remark that \cite{helmbold2000line} also give a deterministic online learner in the full-information setting under constraints on the number of false positive and false negative mistakes (see Algorithm SCS in \cite[Section 2]{helmbold2000line}). However, similar to \cite{helmbold2000apple}, their algorithm checks the existence of an online learning algorithm satisfying certain properties. We extend on this result by giving an SOA-type algorithm that only requires computing combinatorial dimensions. 

Lemma \ref{lem:conver} is the restatement of Corollary 2 in \cite{helmbold2000apple}. For completeness sake, we provide a proof in  Appendix \ref{app:conv}.  Lemma \ref{lem:up} follows by composing Lemma \ref{lem:soaconstrained} and Lemma \ref{lem:conver}.

\begin{lemma} [\cite{helmbold2000apple}]\label{lem:conver}
\noindent For any $\mathcal{H} \subseteq \{0, 1\}^{\mathcal{X}}$, if there exists a deterministic learner which, under full-information feedback, makes at most $M_{-}$ \emph{false negative} mistakes and at most $M_{+}$ \emph{false positive} mistakes, then there exists a randomized learner, whose expected number of mistakes, under apple tasting feedback, is at most $M_{+} + 2\sqrt{T M_{-}}$ in the realizable setting. 
\end{lemma}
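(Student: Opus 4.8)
The plan is to re-derive the stochastic ``apple tasting transformation'' of \cite{helmbold2000apple}. Fix the deterministic full-information learner $A$ guaranteed by the hypothesis, making at most $M_+$ false positives and at most $M_-$ false negatives on every realizable stream. I would build a randomized apple-tasting learner $\mathcal{B}$ that internally simulates $A$ but feeds it \emph{only the rounds on which $\mathcal{B}$ itself predicts $1$}. Concretely, on round $t$ with instance $x_t$: let $z_t := A(x_t)$ be the simulated prediction; if $z_t = 1$ then $\mathcal{B}$ predicts $\hat{y}_t = 1$; if $z_t = 0$ then $\mathcal{B}$ predicts $\hat{y}_t = 1$ with probability $p$ (``explore'') and $\hat{y}_t = 0$ otherwise, where $p = \min\{1, \sqrt{M_-/T}\}$. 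Whenever $\hat{y}_t = 1$, apple-tasting feedback reveals $y_t$ and $\mathcal{B}$ updates the internal copy of $A$ with $(x_t,y_t)$; whenever $\hat{y}_t = 0$, $\mathcal{B}$ observes nothing and leaves $A$'s state untouched. The key structural point is that the sequence of labelled examples ever fed to $A$ is a \emph{subsequence} of the (fixed, realizable) stream, hence itself realizable, so $A$ keeps its $M_+/M_-$ guarantees for the whole run; and $A$'s prediction on the $k$-th example it processes is exactly $z_{t_k}$, where $t_k$ is the round of that example.

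Next I would decompose $\mathcal{B}$'s mistakes into three disjoint groups and bound each in expectation. (a) Rounds with $z_t = 1$: $\mathcal{B}$ predicts $1$, so it errs iff $y_t = 0$, which is precisely a false positive of $A$ on its sub-stream; this count is at most $M_+$ with probability one. (b) Exploration rounds ($z_t = 0$, $\hat{y}_t = 1$): $\mathcal{B}$ errs only when $y_t = 0$, so the count is at most the number of exploration rounds, whose expectation is at most $pT$ (each round is explored with probability at most $p$, the coin being independent of $z_t$ and the past). (c) Rounds with $z_t = 0$ and $\hat{y}_t = 0$: $\mathcal{B}$ errs iff $y_t = 1$; call these the \emph{missed dangerous rounds}. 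Groups (a) and (b) are routine; the whole difficulty is bounding group (c) in expectation by roughly $M_-/p$, after which summing the three bounds gives $M_+ + pT + M_-/p$, and the choice $p = \sqrt{M_-/T}$ yields $M_+ + 2\sqrt{TM_-}$ (the degenerate cases $M_- = 0$ and $M_- \ge T$ are handled separately: the former by the ``no dangerous rounds'' observation below, the latter trivially by the all-ones predictor, whose mistakes are at most $T \le \sqrt{TM_-}$).

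The heart of the argument, and the step I expect to be the main obstacle, is an observation about \emph{dangerous} rounds (rounds with $z_t = 0$ and $y_t = 1$, i.e.\ exactly the potential false negatives of $\mathcal{B}$): \emph{once the simulated $A$ has accumulated $M_-$ false negatives on its input sub-stream, no later round can be dangerous.} Indeed $z_t = A(x_t)$ depends only on $A$'s current sub-stream; appending $(x_t,1)$ to that sub-stream keeps it a subsequence of the true stream, hence realizable, so $A$ could not then predict $0$ on $x_t$ without incurring an $(M_-{+}1)$-st false negative — forcing $A(x_t)=1$, contradicting $z_t=0$. Consequently, every dangerous round that $\mathcal{B}$ explores strictly increases $A$'s false-negative count, and all dangerous rounds cease once that count hits $M_-$. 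Now enumerate the dangerous rounds $t_1 < t_2 < \cdots < t_D$ in order: the exploration coins $b_{t_1}, b_{t_2}, \ldots$ are i.i.d.\ $\mathrm{Bernoulli}(p)$ and each is independent of the past (the stream is fixed in advance in the minimax quantity), and by the observation the list of dangerous rounds terminates no later than the first time the partial sum of these coins reaches $M_-$. Coupling with an infinite i.i.d.\ $\mathrm{Bernoulli}(p)$ sequence and letting $\tau$ be the index of its $M_-$-th success, we get $D \le \tau$ pathwise, so the number of missed dangerous rounds is $D - \sum_{i \le D} b_{t_i} \le \tau - M_-$, whose expectation is $M_-/p - M_- \le M_-/p$ by the negative-binomial mean. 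Combining this with the bounds on groups (a) and (b) and optimizing $p$ finishes the proof.
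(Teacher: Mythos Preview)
Your proposal is correct and follows essentially the same route as the paper's proof: the randomized learner is identical to the paper's Algorithm~\ref{alg:conv}, your three-group decomposition coincides with the paper's $S_{110}$, $S_{010}$, $S_{001}$ (the paper also notes $S_{101}=\varnothing$), and your negative-binomial bound on the missed dangerous rounds is exactly the paper's bound $\mathbb{E}[|S_{001}|]\le \sqrt{M_-T}-M_-$. If anything, you are more explicit than the paper on the two delicate points: (i) why no further dangerous rounds can occur once $A$ has logged $M_-$ false negatives on its sub-stream (the paper asserts the negative-binomial domination without spelling this out), and (ii) the coupling that justifies treating the exploration coins restricted to dangerous rounds as an i.i.d.\ Bernoulli$(p)$ sequence despite the fact that ``dangerous'' depends on earlier coins; this is correct because $\{t_i=t\}\in\sigma(b_1,\dots,b_{t-1})$ while $b_t$ is fresh.
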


\subsection{Lower Bounds for Randomized Learners in the Realizable Setting} \label{sec:lbrandomreal}

As in the upper bound, we prove a slightly stronger lower bound than the one stated in Theorem \ref{thm:real}.

\begin{lemma}[Realizable Lower Bound] \label{lem:reallb}
\noindent For any hypothesis class $\mathcal{H} \subseteq \{0, 1\}^{\Xcal}$, 
$$\inf_{\mathcal{A}}\operatorname{M}_{\mathcal{A}}(T, \mathcal{H}) \geq \frac{1}{8}\sup_{w \in \mathbbm{N}} \sqrt{\min\{w, \operatorname{L}(\mathcal{H}), T\} \,\min\{\operatorname{AL}_w(\mathcal{H}), T\}}.$$
\end{lemma}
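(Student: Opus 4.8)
The plan is to invoke Yao's minimax principle: since the learner may be randomized, it suffices to fix $w \in \mathbb{N}$ and exhibit a distribution $\mathcal{D}$ over realizable length-$T$ streams on which \emph{every deterministic} learner incurs expected mistakes at least $\frac{1}{8}\sqrt{k n}$, where $k := \min\{w, \operatorname{L}(\mathcal{H}), T\}$ and $n := \min\{\operatorname{AL}_w(\mathcal{H}), T\}$; we may assume $k, n \ge 1$, as otherwise the bound is $0$. Applying Lemma~\ref{lem:struct}(i) and (ii) at width $k \le w$ gives $\operatorname{AL}_k(\mathcal{H}) \ge \max\{\operatorname{AL}_w(\mathcal{H}), k\} \ge n$, so some apple tree of width $k$ and depth at least $n$ is shattered; truncating it to depth $n$ (which preserves both the apple-tree recursion and shatterability) yields a shattered AL tree $\mathcal{T}$ of width $k$ and depth $n$.

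Set $M := \lfloor \min\{\sqrt{T/k},\, n/k\} \rfloor$, so $M \ge 1$ because $T \ge k$ and $n \ge k$; assume first $M \ge 2$. The distribution $\mathcal{D}$ is a ``$k$-phase needle hunt'' read off $\mathcal{T}$: draw independent uniform needle indices $j^\star_1, \dots, j^\star_k \in [M]$, then walk down $\mathcal{T}$ in $k$ phases, where phase $s$ presents, in order, the instances labeling the first $j^\star_s$ nodes on the left spine of the current subtree --- each shown $M$ times in a row (a ``block'') --- assigning label $1$ to every presentation in block $j^\star_s$ and label $0$ to all earlier presentations, and then descends the right edge out of that block's node to begin phase $s+1$; after phase $k$ the stream is padded to length $T$ by repeating the final block's instance (still labelled $1$). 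Collapsing consecutive repetitions, the instances visited trace a root-to-leaf path of $\mathcal{T}$ carrying exactly $k$ ones (one per phase), hence a valid and therefore shattered path, so the hypothesis it certifies is consistent with every (repeated) presentation and the stream is realizable. The stream uses at most $k M^2 \le T$ rounds and at most $k M \le n$ spine nodes, and --- the crucial point --- \emph{every} tuple $(j^\star_1, \dots, j^\star_k)$ yields such a shattered path, so no observation can ever exclude a configuration and the phases remain stochastically independent from the learner's viewpoint. (When $M \le 1$ we instead use a ``one random bit per phase'' construction: present $k$ distinct (spine) instances, each $\lfloor T/k \rfloor$ times, labelling phase $s$ by an independent uniform bit; a shattered complete binary tree of depth $k$, which exists since $k \le \operatorname{L}(\mathcal{H})$, certifies realizability for every bit string.)

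Now fix a deterministic learner. Inside a block, the only sensible moves are to \emph{probe} (predict $1$ on the first presentation) or to \emph{skip} (predict $0$ throughout): probing a non-needle block costs exactly one false positive and reveals that block is not the needle, whereas skipping the needle block costs $M$ false negatives and --- precisely because predicting $0$ returns no feedback --- reveals nothing, so the needle cannot be identified until after its block ends and the next phase begins, by which point the penalty is already paid. Thus phase $s$ is a hunt among $M$ blocks for a uniformly placed needle, in which a missed needle costs $M$ and a wrong probe costs $1$; a short backward-induction argument shows the optimal policy probes each block in turn and that the optimal expected cost of a phase is exactly $(M-1)/2$. Since, by independence, no history tells the learner anything about the present phase's needle, every deterministic learner incurs expected cost at least $(M-1)/2$ in each of the $k$ phases, hence at least $\frac{k(M-1)}{2} \ge \frac{kM}{4}$ in all. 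With $M = \lfloor \min\{\sqrt{T/k}, n/k\} \rfloor$ the budget constraints $kM \le n$ and $kM^2 \le T$ both hold, and $\frac{kM}{4} \ge \frac{k}{8}\min\{\sqrt{T/k}, n/k\} = \frac{1}{8}\min\{\sqrt{kT},\, n\} \ge \frac{1}{8}\sqrt{kn}$, the last step using $T \ge n \ge k$. In the $M \le 1$ regime the same reasoning gives expected cost at least $\frac{1}{2}$ per phase and hence at least $\frac{k}{2} \ge \frac{1}{8}\sqrt{kn}$ in total (here $n < 16k$). Taking the supremum over $w$ proves the lemma.

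The main obstacle is the construction: fitting the symmetric ``$M$ blocks of size $M$ per phase'' design onto the \emph{asymmetric} apple tree so that realizability holds simultaneously for all needle configurations (and cleanly handling the padding and the degenerate small-$M$, tiny-$k$, or tiny-$n$ cases), together with the per-phase lower bound --- whose heart is the observation that the absence of feedback on $0$-predictions converts a missed needle block into an all-or-nothing penalty. Once these are in place, optimizing $M$ against the depth and horizon budgets is routine.
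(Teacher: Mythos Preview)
Your proof is correct and takes a genuinely different route from the paper's. The paper fixes a randomized learner $\mathcal{A}$ and builds a \emph{deterministic} hard stream adaptively: it walks the AL tree one block at a time, setting the label to $0$ when $\mathcal{A}$'s probability of predicting $1$ somewhere in the block is at least $\tfrac{1}{2}$ and to $1$ otherwise, then lower-bounds the expected mistakes by a case split on how many ones the resulting path $\sigma^\star$ picked up. You instead invoke Yao's principle and design a \emph{random} stream---a $k$-phase needle-in-a-haystack laid out along left spines of the AL tree---and lower-bound every deterministic learner's cost via a short MDP argument showing that probing dominates skipping and that the optimal per-phase cost is $(M-1)/2$. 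Both constructions exploit the same two ingredients (the asymmetric apple-tree structure and the fact that predicting $0$ yields no feedback), and both arrive at the same bound up to constants; your approach has the advantage of making the ``searching for a hidden $1$ costs $\sqrt{\cdot}$'' intuition explicit and modular, while the paper's adaptive-threshold construction avoids the separate small-$M$ case and the per-phase optimality argument. One small point worth tightening in a full write-up: your claim that ``no observation can ever exclude a configuration'' is imprecise (observations do reveal past needles), but the statement you actually need---that $j^\star_s$ remains uniform on $[M]$ given everything observable before phase $s$---follows immediately from the independence of the $j^\star_s$'s and does the job.
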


The lower bounds in Theorem \ref{thm:real} follows by picking $w = \operatorname{W}(\mathcal{H}) - 1$ and $w = \operatorname{L}(\mathcal{H}) + 1$ respectively. When $w = \operatorname{W}(\mathcal{H}) - 1$, we have that 
$\min\{w, \operatorname{L}(\mathcal{H}), T\} = \min\{\operatorname{W}(\mathcal{H}) - 1, T\}$ and  $\min\{\operatorname{AL}_w(\mathcal{H}), T\} \geq \min\{\operatorname{W}(\mathcal{H}) - 1, T\}$ using Lemma \ref{lem:struct} (ii) and (iv).
On the other hand, when $w = \operatorname{L}(\mathcal{H}) + 1$, we have that $\min\{\operatorname{AL}_w(\mathcal{H}), T\} = \min\{\operatorname{L}(\mathcal{H}), T\}$ using Lemma \ref{lem:struct} (iii).


\begin{proof} Let $\mathcal{H} \subseteq \{0, 1\}^{\mathcal{X}}$, $w \in \mathbbm{N}$, and $T \in \mathbb{N}$ be the time horizon. Since learning under apple tasting feedback implies learning under full-information feedback, a lower bound of $\frac{\min\{T, \operatorname{L}(\mathcal{H})\}}{2}$ on the minimax expected number of mistakes follows trivially from the full-information feedback lower bound. Accordingly, for the remainder of the proof we suppose $w \leq \min\{\operatorname{L}(\mathcal{H}), T\}$, since if this condition is not met, the claimed lower bound is at most $\frac{\min\{T, \operatorname{L}(\mathcal{H})\}}{2}$.   
Let $\mathcal{T}$ be any AL tree of width $w$ of depth $d = \floor*{\sqrt{w\min\{T, \operatorname{AL}_{w}(\mathcal{H})\}}}$ shattered by $\mathcal{H}$. Such a tree must exist because $d \leq \operatorname{AL}_{w}(\mathcal{H})$. Let $\mathcal{A}$ be any randomized apple tasting online learner. Our goal will be to construct a hard, determinsitic,  realizable stream of instances $(x_1, y_1), ..., (x_T, y_T)$ such that $\mathcal{A}$'s expected regret is at least $\frac{d}{4}$. 




We first construct a path $\sigma^{\star}$ down $\mathcal{T}$ recursively using $\mathcal{A}$. Starting with $\sigma^{\star}_1$, let $A_1$ be the event that $\mathcal{A}$, if presented with $\floor*{\frac{d}{w}}$ copies of the root node $x^{\star}_1$, predicts $1$ on at least one of the copies. Then, set $\sigma^{\star}_1 = 0$ if $\mathbbm{P}(A_1) \geq \frac{1}{2}$ and set  $\sigma^{\star}_1 = 1$ otherwise. For $j \geq 2$, let $x^{\star}_1, ..., x^{\star}_j$ be the sequence of instances labeling the internal nodes along the prefix $(\sigma^{\star}_1, ..., \sigma^{\star}_{j-1})$ down $\mathcal{T}$. Let $A_j$ be the event that $\mathcal{A}$, if simulated with the sequence of $(j-1)\floor*{\frac{d}{w}}$ labeled instances consisting of $\floor*{\frac{d}{w}}$ copies of the labeled instance $(x^{\star}_1, \sigma^{\star}_{1})$, followed by  $\floor*{\frac{d}{w}}$ copies of the labeled instance $(x^{\star}_2, \sigma^{\star}_{2})$,..., followed by $\floor*{\frac{d}{w}}$ copies of the labeled instance $(x^{\star}_{j-1}, \sigma^{\star}_{j-1})$, predicts the label $1$ at least once when presented with $\floor*{\frac{d}{w}}$ copies of the instance $x^{\star}_j$. Set $\sigma^{\star}_j = 0$ if $\mathbbm{P}(A_j) \geq \frac{1}{2}$ and set  $\sigma^{\star}_j = 1$ otherwise. Continue this process until $\sigma^{\star}$ is a valid path that reaches the end of tree $\mathcal{T}$.

We now construct our hard, labeled stream in blocks of size $\floor*{\frac{d}{w}}$. Each block only contains a single labeled instance, repeated $\floor*{\frac{d}{w}}$ times. For the first block $B_1$, repeat the labeled instance $(x_1^{\star}, \sigma^{\star}_1)$. Likewise, for block $B_j$ for $2 \leq j \leq |\sigma^{\star}|$, repeat for $\floor*{\frac{d}{w}}$ times the labeled instance  $(x_j^{\star}, \sigma^{\star}_j)$. Now, consider the stream $S = (B_1, ..., B_{|\sigma^{\star}|})$ obtained by concatenating the blocks $B_1, ..., B_{|\sigma^{\star}|}$ in that order. If $|\sigma^{\star}|\floor*{\frac{d}{w}} < T$, populate the rest of the stream $S$ with the labeled instance $(x^{\star}_{|\sigma^{\star}|}, \sigma^{\star}_{|\sigma^{\star}|}).$

We first claim that such a stream is realizable by $\mathcal{H}$. This follows trivially from the fact that (1) $\sigma^{\star}$ is a valid path down $\mathcal{T}$, (2) by the definition of shattering, there exists a hypothesis $h \in \mathcal{H}$ such that for all $j \in [|\sigma^{\star}|]$, we have $h(x_j^{\star}) = \sigma_{j}^{\star}$ and (3) our stream $S$ only contains labeled instances from the set $\{(x_j^{\star}, \sigma_j^{\star})\}_j$. We now claim that $\mathcal{A}$'s expected regret on the stream $S$ is at least $\frac{d}{4}$. To see this, observe that whenever $\sigma_j^{\star} = 1$, $\mathcal{A}$'s expected mistakes on the block $B_j$ is at least $\frac{1}{2}\floor*{\frac{d}{w}}$ since $\mathcal{A}$ gets passed the labeled instance $(x_j^{\star}, 1)$ for $\floor*{\frac{d}{w}}$ iterations, but the probability that it never predicts $1$ on this batch after seeing $B_1, ..., B_{j-1}$ is $\mathbb{P}(A^c_j) \geq \frac{1}{2}$.  Likewise, whenever $\sigma_j^{\star} = 0$, $\mathcal{A}$'s expected mistakes on the block $B_j$ is at least $\frac{1}{2}$ since it gets passed the labeled instance $(x_j^{\star}, 0)$ for $\floor*{\frac{d}{w}}$ time points but predicts $1$ on at least one of them with probability $\mathbb{P}(A_j) \geq \frac{1}{2}$.

We now lower bound the expected mistakes of $\mathcal{A}$ on the entire stream $S$ by considering the number of ones in $\sigma^{\star}$ on a case by case basis. Note that since $\sigma^{\star}$ is a valid path down $\mathcal{T}$, we have $w \leq |\sigma^{\star}| \leq d$. Consider the case where $\sigma^{\star}$ has $w$ ones. Then, $\mathcal{A}$'s expected regret is at least its expected regret on those batches $B_j$ where $\sigma^{\star}_j = 1$. Thus, its expected regret is at least $\frac{w}{2}\floor*{\frac{d}{w}} \geq \frac{w}{2}\frac{d}{2w} \geq \frac{d}{4}$. Consider the case where $\sigma^{\star}$ has $w-j$ ones for $w \geq j \geq 1$. Then, since $\sigma^{\star}$ is a valid path, it must be the case that there are $d - (w - j)$ zero's in $\sigma^{\star}$. Therefore, $\mathcal{A}$'s expected regret is at least
$$\frac{(w - j)}{2}\floor*{\frac{d}{w}} + \frac{d - w + j}{2} \geq \frac{d}{2} - \frac{w-j}{2} + \frac{w-j}{2}\floor*{\frac{d}{w}}\geq  \frac{d}{2}.$$
\noindent where the last inequality follows from the fact that $d \geq w$. Thus, in all cases,  $\mathcal{A}$'s expected regret is at least $\frac{d}{4}.$ The claimed lower bound follows by using the fact that $d\geq \sqrt{w\min\{T, \operatorname{AL}_{w}(\mathcal{H})\}}/2$. 
\end{proof}

\section{Agnostic Learnability} \label{sec:agn}

 We show that the Ldim quantifies the minimax expected regret in the agnostic setting under apple tasting feedback, closing the open problem posed by \cite[Page 138]{helmbold2000apple}. 

\begin{theorem}[Agnostic Learnability] \label{thm:agn}
\noindent 
    For any hypothesis class $\mathcal{H} \subseteq \{0, 1\}^{\mathcal{X}}$, 
    $$\sqrt{\frac{\operatorname{L}(\mathcal{H}) T}{8}} \leq \inf_{\mathcal{A}}\operatorname{R}_{\mathcal{A}}(T, \mathcal{H}) \leq 3\sqrt{\operatorname{L}(\mathcal{H})T\ln T}.$$
\end{theorem}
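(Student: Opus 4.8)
The plan is to establish the upper and lower bounds separately. For the upper bound I would follow the ``realizable-to-agnostic'' recipe of \cite{ben2009agnostic} built on top of an apple-tasting prediction-with-expert-advice subroutine; for the lower bound I would use the standard randomized-labels construction on a shattered Littlestone tree, which already applies under apple tasting feedback since apple tasting is strictly harder than full information.

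\textbf{Upper bound.} If $\operatorname{L}(\mathcal{H}) = \infty$ the bound is vacuous, so assume $L := \operatorname{L}(\mathcal{H}) < \infty$. First I would build a finite expert class from the SOA, which makes at most $L$ mistakes on any realizable stream under full information: for each $I \subseteq [T]$ with $|I| \le L$, let expert $E_I$ run the SOA but, on every round $t \in I$, update its internal version space as though it had just erred (regardless of the observed label), so that $E_I$ is a deterministic function of $x_1,\dots,x_t$ that needs no feedback to simulate. There are $N$ such experts with $\log N = O(L \log T)$, and a standard argument shows that if $h^{\star} \in \mathcal{H}$ minimizes $\sum_t \mathbbm{1}\{h(x_t)\ne y_t\}$ and $I^{\star}$ is the set of (at most $L$) rounds on which the SOA, fed the $h^{\star}$-labeled stream, disagrees with $h^{\star}$, then $E_{I^{\star}}$ predicts $h^{\star}(x_t)$ on every round; hence some expert has cumulative $0/1$ loss at most $\inf_{h \in \mathcal{H}}\sum_t \mathbbm{1}\{h(x_t)\ne y_t\}$. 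It therefore suffices to design an algorithm for binary prediction with expert advice under apple tasting feedback whose regret against the best of $N$ experts is $\tilde O(\sqrt{T\log N})$; plugging in $\log N = O(L\log T)$ and tracking constants then gives the stated $3\sqrt{\operatorname{L}(\mathcal{H})T\ln T}$.

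\textbf{The experts subroutine (the technical heart).} The key observation is that apple tasting is a \emph{strongly observable} feedback structure at the level of experts: since expert $i$'s loss on round $t$ is simply $\mathbbm{1}\{p_{i,t}\ne y_t\}$, predicting $1$ reveals $y_t$ and hence the loss of \emph{every} expert, while predicting $0$ reveals nothing; equivalently, the two ``meta-actions'' predict-$1$ and predict-$0$ form a strongly observable feedback graph with independence number $1$. I would extend EXP3.G of \cite{alon2015online} to this setting: maintain exponential weights over the experts, predict $1$ either because the sampled expert recommends $1$ or because of (carefully scheduled) forced exploration, and on rounds where $1$ is predicted form the unbiased estimates $\hat\ell_{i,t}=\mathbbm{1}\{p_{i,t}\ne y_t\}/q_t$, with $q_t$ the probability of predicting $1$. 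The main obstacle is bounding the second-moment term $\sum_t \expect[\sum_i w_{i,t}\hat\ell_{i,t}^2]$, which scales like $\sum_{t:\,y_t=1} 1/q_t$: a fixed exploration probability $\gamma$ yields only $\tilde O(\sqrt{T\log N/\gamma})+\gamma T = \tilde O(T^{2/3})$, so one must instead use the adaptive exploration schedule characteristic of EXP3.G for strongly-observable-but-non-self-looped graphs, together with the complementarity $\mathbbm{1}\{0\ne y_t\}+\mathbbm{1}\{1\ne y_t\}=1$, to push this down to $\tilde O(\sqrt{T\log N})$. This variance-control step is where I expect essentially all the difficulty to lie.

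\textbf{Lower bound.} Any apple tasting learner is in particular a full-information learner that discards the feedback it receives on rounds where it predicts $1$, so it suffices to lower bound the minimax agnostic regret under full information; in fact the construction below is oblivious to the feedback, so it applies verbatim under apple tasting. Assume $\operatorname{L}(\mathcal{H}) \le T$ (otherwise one replaces $\operatorname{L}(\mathcal{H})$ by $\min\{\operatorname{L}(\mathcal{H}),T\}$, which only strengthens the claim relative to $\sqrt{\operatorname{L}(\mathcal{H})T/8}$). Fix a Littlestone tree of depth $L$ shattered by $\mathcal{H}$, partition $[T]$ into $L$ consecutive blocks of size $m := \lfloor T/L\rfloor$, and have the adversary walk down the tree one level per block: in block $j$ it presents $m$ copies of the instance at the current node, each labeled by an independent fair coin, then descends to the child indicated by the \emph{majority} label of block $j$. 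Against i.i.d.\ fair coins any learner incurs expected loss exactly $T/2$, whereas by shattering there is $h^{\star} \in \mathcal{H}$ consistent with the realized majority-label path, whose expected loss is $\sum_{j=1}^{L} \expect[\min(S_j, m-S_j)] = T/2 - \sum_{j=1}^{L}\expect|S_j - m/2|$ with $S_j \sim \mathrm{Bin}(m,1/2)$. Anti-concentration of the binomial ($\expect|S_j - m/2| = \Theta(\sqrt m)$, and $\expect|S_j-m/2| \ge 1/2$ for all $m \ge 1$) then gives a regret lower bound of order $\sqrt{LT}$; the only delicate point is bookkeeping the binomial constant (and the floor in $m$, together with the small-$m$/large-$L$ edge cases) carefully enough to land the stated factor $1/\sqrt 8$. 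Everything else is routine.
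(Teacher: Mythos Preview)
Your plan matches the paper's proof: the upper bound constructs the SOA-based expert cover of \cite{ben2009agnostic} and runs an EXP3.G-style experts algorithm (the paper's EXP4.AT) under apple tasting, while the lower bound is simply the full-information agnostic lower bound of \cite{ben2009agnostic}, which the paper cites without reproving. One technical correction on the subroutine: the variance obstacle is resolved not by an \emph{adaptive} exploration schedule but by a refined second-order analysis with \emph{fixed} exploration $\eta$ equal to the learning rate. Working with centered loss estimates $\ell'_t(y)=\hat\ell_t(y)-\sum_{y'}p_t^{y'}\hat\ell_t(y')$ replaces the naive term $\eta\sum_t p_t^0\,\hat\ell_t(0)^2$ by $\eta\sum_t p_t^1(1-p_t^1)\,\hat\ell_t(0)^2$, and the extra factor $p_t^1$ exactly cancels the $1/p_t^1$ in $\mathbb{E}_t[\hat\ell_t(0)^2]=\mathbbm{1}\{y_t=1\}/p_t^1$; together with $p_t^1\,\mathbb{E}_t[\hat\ell_t(1)^2]=\mathbbm{1}\{y_t=0\}$ this gives $O(\eta T)$ for the second-order term directly, so tuning $\eta=\sqrt{(\ln N)/(2T)}$ already yields $3\sqrt{T\ln N}$ without any adaptivity.
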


The lower bound in Theorem \ref{thm:agn} follows directly from the full-information lower bound in the agnostic setting \citep{ben2009agnostic}. Therefore, in this section, we only focus on proving the upper bound. Our strategy will be in two steps. First, we modify the celebrated Randomized Exponential Weights Algorithm (REWA) \citep{cesa2006prediction} to handle apple tasting feedback by using the ideas from \cite{alon2015online}. In particular, our algorithm EXP4.AT is an adaptation of EXP3.G from \cite{alon2015online} to binary prediction with expert advice under apple tasting feedback. Second, we give an agnostic online learner which uses the SOA to construct a finite set of experts that exactly covers  $\mathcal{H}$ and then runs EXP4.AT using these experts. The upper bound in Theorem \ref{thm:agn} follows immediately from the composition of these two results.

\subsection{The EXP4.AT Algorithm}

In this subsection, we present EXP4.AT, an adaptation of REWA to handle apple tasting feedback.

\begin{algorithm}
\caption{EXP4.AT: online learning with apple tasting feedback}\label{alg:cap}
\KwIn{Learning rate $\eta \in (0, \frac{1}{2})$}

Let $q_1$ be the uniform distribution over $[N]$

\For{$t = 1,...,T$} {
    Get advice $\mathcal{E}^1_t,...,\mathcal{E}^N_t \in \{0, 1\}^N$
    
    Compute $p_t^1 = (1-\eta)\sum_{i=1}^N q_t^i \mathcal{E}^i_t + \eta$

    Predict $\hat{y}_t = 1$ with probability $p_t^1$ and  $\hat{y}_t = 0$ with probability $p_t^0 = 1 - p_t^1$

    Observe true label $y_t$ if $\hat{y}_t = 1$ and let $\hat{\ell}_t(y) = \frac{\mathbbm{1} \{y \neq y_t\}\mathbbm{1} \{\hat{y}_t = 1\}}{p_t^1}$
    
    For $i = 1,...,N$ update
    $q_{t+1}^i = \frac{q_t^i\text{exp}(-\eta \hat{\ell}_t(\mathcal{E}_t^i))}{\sum_{j=1}^N q_t^j\text{exp}(-\eta \hat{\ell}_t(\mathcal{E}_t^j))}$
    
     
}
    
\end{algorithm}

\begin{theorem}[EXP4.AT Regret Bound] \label{thm:exp4rb}
\noindent If $\eta = \sqrt{\frac{\ln{N}}{2T}}$, then for any sequence of true labels $y_1, ..., y_T$, the predictions $\hat{y}_1, ..., \hat{y}_T$, output by \emph{EXP4.AT} satisfy: 

$$\mathbbm{E}\left[\sum_{t=1}^T \mathbbm{1}\{\hat{y}_t \neq y_t\} \right] \leq \inf_{j \in [N]} \sum_{t=1}^T \mathbbm{1}\{\mathcal{E}^j_t \neq y_t\} + 3\sqrt{T\ln{N}}.$$

\end{theorem}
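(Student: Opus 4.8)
The plan is to carry the textbook exponential-weights (EXP4) argument through, with the two apple-tasting-specific features of EXP4.AT accounted for carefully: the floor $p_t^1 \ge \eta$ that forces a minimum probability on the informative prediction ``$1$'', and the importance-weighted loss estimator $\hat{\ell}_t$. Write the unnormalized weights $w_1^i = 1$, $w_{t+1}^i = w_t^i\exp(-\eta\hat{\ell}_t(\Ecal_t^i))$, so $W_t := \sum_i w_t^i$, $q_t^i = w_t^i/W_t$, and $q_1$ uniform, which is exactly the sequence the algorithm maintains. Since each $\hat{\ell}_t(\Ecal_t^i)\ge 0$, the inequalities $e^{-x}\le 1-x+\tfrac{x^2}{2}$ (for $x\ge 0$) and $\ln(1+u)\le u$ give $\ln(W_{t+1}/W_t) \le -\eta\sum_i q_t^i\hat{\ell}_t(\Ecal_t^i) + \tfrac{\eta^2}{2}\sum_i q_t^i\hat{\ell}_t(\Ecal_t^i)^2$; summing over $t$, using $W_1=N$ and $W_{T+1}\ge \exp(-\eta\sum_{s=1}^T\hat{\ell}_s(\Ecal_s^j))$ for any fixed $j$, and rearranging yields
$$\sum_{t=1}^T\sum_i q_t^i\hat{\ell}_t(\Ecal_t^i) \;\le\; \sum_{t=1}^T\hat{\ell}_t(\Ecal_t^j) \;+\; \frac{\ln N}{\eta} \;+\; \frac{\eta}{2}\sum_{t=1}^T\sum_i q_t^i\hat{\ell}_t(\Ecal_t^i)^2 \qquad (\star)$$

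Next I would pass to expectations. Conditioning on the history up to the coin flip producing $\hat{y}_t$, we have $\prob(\hat{y}_t=1\mid\cdot)=p_t^1$, hence $\expect[\hat{\ell}_t(y)\mid\cdot]=\indicator\{y\neq y_t\}$ for $y\in\{0,1\}$, i.e.\ $\hat{\ell}_t$ is unbiased for the $0$--$1$ loss; so taking expectations turns the comparator term in $(\star)$ into $\sum_t\indicator\{\Ecal_t^j\neq y_t\}$ and the left-hand side into $\expect[\sum_t\sum_i q_t^i\indicator\{\Ecal_t^i\neq y_t\}]$. Two further ingredients finish the job. First, from $p_t^1=(1-\eta)\sum_i q_t^i\Ecal_t^i+\eta$, a short case check on $y_t\in\{0,1\}$ gives $\prob(\hat{y}_t\neq y_t\mid\cdot)\le \sum_i q_t^i\indicator\{\Ecal_t^i\neq y_t\}+\eta$, so EXP4.AT's expected number of mistakes exceeds the expected weighted expert loss by at most $\eta T$ (the cost of forced exploration toward ``$1$''). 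Second, one must control the second-order sum on the right of $(\star)$.

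The second-order sum is the heart of the matter. Idempotence of indicators gives $\hat{\ell}_t(\Ecal_t^i)^2 = \tfrac{1}{p_t^1}\hat{\ell}_t(\Ecal_t^i)$, and $p_t^1\ge\eta$ caps the importance weight at $1/\eta$ --- this is the apple-tasting analogue of forced exploration on an action that is never self-observing (here, predicting ``$0$'', which returns no feedback), mirroring the EXP3.G algorithm of \citep{alon2015online}. Conditionally, the round-$t$ term equals $\tfrac{1}{p_t^1}\sum_i q_t^i\indicator\{\Ecal_t^i\neq y_t\}$, which is $O(1)$ whenever $y_t=0$; when $y_t=1$ it can be of order $1/\eta$, and it is this case that needs the EXP3.G-style care. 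The structural point is that a large value of the term means a large $q_t$-mass of experts simultaneously incurs the large estimate $1/p_t^1$ and is therefore heavily downweighted, so such rounds are self-limiting; their aggregate contribution can be amortized against the total potential drop $\ln W_1-\ln W_{T+1}$, i.e.\ against $\ln N$ plus (a multiple of) the comparator's cumulative loss, so that after optimizing $\eta$ the second-order sum contributes only $O(\sqrt{T\ln N})$.

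Assembling, substituting the two ingredients and the second-order bound into $(\star)$ and choosing $\eta=\sqrt{\ln N/(2T)}$, the regret is at most $\eta T+\tfrac{\ln N}{\eta}+O(\sqrt{T\ln N})$, which after bookkeeping the constants is $\le 3\sqrt{T\ln N}$; taking the infimum over $j$ gives the claim. I expect the potential telescoping and the unbiasedness step to be routine; the real obstacle is the $y_t=1$ second-order bound, because --- unlike ordinary bandit feedback, where the importance weight of an action is its own playing probability and so cancels against the corresponding weight --- here the importance weight $p_t^1$ is the probability of \emph{observing} the label and is shared by all experts, so it does not self-cancel and the naive per-round bound is only $O(1/\eta)$; making the amortization precise with a small enough constant is the delicate part.
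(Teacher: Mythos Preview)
Your plan has a genuine gap in the second-order step, and your own closing paragraph identifies exactly where: the $y_t=1$ case. The amortization you sketch does not close it. Concretely, from $(\star)$ and $\hat\ell_t(\Ecal_t^i)^2=\hat\ell_t(\Ecal_t^i)/p_t^1$ together with $p_t^1\ge\eta$ you get
\[
\tfrac{\eta}{2}\sum_t\sum_i q_t^i\hat\ell_t(\Ecal_t^i)^2 \;\le\; \tfrac{1}{2}\sum_t\sum_i q_t^i\hat\ell_t(\Ecal_t^i),
\]
and feeding this back into $(\star)$ yields, after expectations, $\expect[\text{weighted loss}]\le 2L^\star+2\ln N/\eta$, hence regret $\le L^\star + 2\ln N/\eta+\eta T$. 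The extra $L^\star$ is fatal when $L^\star=\Theta(T)$. There is no further potential to amortize against: the telescoping that produced $(\star)$ has already consumed the entire drop $\ln W_1-\ln W_{T+1}$. To see that the second-order term really is $\Theta(T)$ in the worst case, take two experts (one always $0$, one always $1$) with $y_t=0$ for $t\le T/2$ and $y_t=1$ afterward; throughout the second half $p_t^1\approx\eta$ and each round contributes $\approx 1/\eta$, so $\tfrac{\eta}{2}\sum_t(\cdot)\approx T/4$.

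What the paper does --- and what EXP3.G actually does for nodes that lack a self-loop --- is \emph{center} the losses before the potential argument: set $\ell'_t(y)=\hat\ell_t(y)-\bar\ell_t$ with $\bar\ell_t=\sum_y p_t^y\hat\ell_t(y)$, apply $e^x\le 1+x+x^2$ (valid since $\eta\ell'_t\le 1$), and only then telescope. The second-order term becomes a \emph{variance},
\[
\sum_y p_t^y(\hat\ell_t(y)-\bar\ell_t)^2 \;\le\; \sum_y p_t^y(1-p_t^y)\hat\ell_t(y)^2 \;\le\; p_t^0p_t^1\,\hat\ell_t(0)^2+p_t^1\,\hat\ell_t(1)^2,
\]
and the crucial extra factor $p_t^1$ in front of $\hat\ell_t(0)^2$ cancels the $1/p_t^1$ from $\expect[\hat\ell_t(0)^2]=\indicator\{y_t=1\}/p_t^1$, leaving an $O(1)$ contribution per round. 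After taking expectations the right-hand side is $\ln N/\eta+2\eta T$, and $\eta=\sqrt{\ln N/(2T)}$ gives the stated $3\sqrt{T\ln N}$. The exploration cost you isolate as a separate $+\eta T$ appears in the paper as the term $\eta\sum_t\hat\ell_t(1)$ arising from converting the $q_t$-weighted sum to the $p_t$-weighted one; either bookkeeping is fine. The only missing idea in your outline is the centering.
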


In order to prove Theorem \ref{thm:exp4rb}, we need the following lemma which gives a second-order regret bound for the EXP4.AT algorithm. The proof of Lemma \ref{lem:exp4rb} follows a similar potential-function strategy as in the proof of Lemma 4 in \cite{alon2015online} and can be found in Appendix \ref{app:experts}.

\begin{lemma}[EXP4.AT Second-order Regret Bound] \label{lem:exp4rb}
\noindent
For any $\eta \in (0, \frac{1}{2})$ and any sequence of true labels $y_1, ..., y_T$, the probabilities $p_1, ..., p_T$ output by \emph{EXP4.AT} satisfy
$$\sum_{t=1}^T \sum_{y \in \{0, 1\}} p_t^y \hat{\ell}_t(y) - \inf_{j \in [N]}\sum_{t=1}^T \hat{\ell}_t(\mathcal{E}_t^{j} ) \leq \frac{\ln N}{\eta} + \eta \sum_{t=1}^T \hat{\ell}_t(1) + \eta\sum_{t=1}^T p_t^1(1 - p_t^1) \hat{\ell}_t(0)^2 + \eta\sum_{t=1}^T p_t^1 \hat{\ell}_t(1)^2.$$
\end{lemma}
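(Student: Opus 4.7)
My plan is to follow the standard exponential-weights potential-function analysis, adapted for apple tasting feedback in the manner of Alon et al.'s analysis of EXP3.G for strongly observable feedback graphs \cite{alon2015online}. I would set up the potential $W_t = \sum_{i=1}^N q_1^i \exp(-\eta L_t^i)$, where $L_t^i = \sum_{s=1}^t \hat{\ell}_s(\mathcal{E}_s^i)$ and $q_1$ is the uniform distribution on $[N]$. Letting $Z_t := W_t/W_{t-1} = \sum_i q_t^i \exp(-\eta \hat{\ell}_t(\mathcal{E}_t^i))$, telescoping gives $\ln W_T = \sum_t \ln Z_t$, while dropping all but the $j$-th term in $W_T$ yields the standard lower bound $\ln W_T \ge -\ln N - \eta L_T^j$ for every $j \in [N]$.

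The heart of the proof is a sharpened per-step upper bound of the form
\[
\ln Z_t \;\leq\; -\eta \sum_y p_t^y \hat{\ell}_t(y) + \eta^2 \Big[\hat{\ell}_t(1) + p_t^1(1-p_t^1)\hat{\ell}_t(0)^2 + p_t^1 \hat{\ell}_t(1)^2\Big].
\]
Summing this over $t$, combining with the lower bound on $\ln W_T$, and dividing by $\eta$ reproduces the stated inequality after rearrangement.

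I would prove the per-step bound by case analysis on $\hat{y}_t$ and $y_t$. When $\hat{y}_t = 0$, the loss estimator $\hat{\ell}_t$ is identically zero and both sides vanish. When $\hat{y}_t = 1$, exactly one of $\hat{\ell}_t(0), \hat{\ell}_t(1)$ equals $1/p_t^1$ while the other is zero, giving the closed form $Z_t = 1 - r_t^{1-y_t}(1 - e^{-\eta/p_t^1})$, where $r_t^a = \sum_{i : \mathcal{E}_t^i = a} q_t^i$. The defining identity $p_t^1 = (1-\eta) r_t^1 + \eta$ provides $r_t^0 = p_t^0/(1-\eta)$ and $r_t^1 = (p_t^1 - \eta)/(1-\eta)$, which is the key for translating an $r_t^a$-based intermediate bound into the target $p_t^y$-based form. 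Combining $\ln(1-x) \leq -x$ with the Taylor refinement $1 - e^{-x} \geq x - x^2/2$ (valid for $x \in [0,1]$, hence for $x = \eta/p_t^1$ since $p_t^1 \geq \eta$) and invoking $\eta \leq 1/2$ then yields the per-step inequality in each of the two substantive cases $y_t = 0$ and $y_t = 1$.

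The main obstacle will be the false-negative case $\hat{y}_t = 1, y_t = 1$, in which the multiplicative update drives mass away from experts predicting $0$ by the potentially large factor $e^{-\eta/p_t^1}$. Matching the target form $-\eta(1-\eta) p_t^0/p_t^1$ in this case requires the $(1-\eta)$ factor to appear in exactly the place where the identity $r_t^0 = p_t^0/(1-\eta)$ places it, and the restriction $\eta \leq 1/2$ is essential for closing the resulting inequality; a naive Taylor expansion yields a bound off by a factor depending on $1/(1-\eta)$ which only the careful substitution can absorb. Once the per-step bound is established uniformly in $t$, the remainder of the argument is routine summation and rearrangement.
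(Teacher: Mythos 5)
Your overall architecture is sound and your per-step target
\[
\ln Z_t \;\le\; -\eta \sum_{y} p_t^y \hat{\ell}_t(y) \;+\; \eta^2\Bigl[\hat{\ell}_t(1) + p_t^1(1-p_t^1)\hat{\ell}_t(0)^2 + p_t^1\hat{\ell}_t(1)^2\Bigr]
\]
is exactly the statement that, summed and combined with the single-expert lower bound on the potential, yields the lemma. The case $\hat{y}_t=1,\,y_t=0$ does close under your chain of estimates. But there is a genuine gap in the case $\hat{y}_t=1,\,y_t=1$ (which, incidentally, is not the false-negative case --- a false negative is $\hat y_t=0, y_t=1$, where $\hat\ell_t\equiv 0$). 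Here the target specializes to $\ln Z_t \le -\eta(1-\eta)\frac{1-p_t^1}{p_t^1}$, and your chain $\ln(1-u)\le -u$ followed by $1-e^{-x}\ge x-x^2/2$ cannot reach it when $p_t^1$ is small. Concretely, take $\eta=0.1$ and suppose every expert predicts $0$, so $r_t^1=0$, $p_t^1=\eta=0.1$, and $x=\eta/p_t^1=1$. The target is $-\eta(1-\eta)\cdot 9=-0.81$, the true value is $\ln Z_t=\ln e^{-1}=-1$, but already your first step gives only $\ln Z_t\le -r_t^0(1-e^{-1})\approx -0.632$, which is not $\le -0.81$; the Taylor refinement only worsens this to $-0.5$. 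The failure is structural: when $\eta\hat\ell_t$ is of order $1$ rather than $o(1)$, the linearization $\ln(1-u)\le -u$ discards exactly the higher-order negativity you need, and no substitution $r_t^0=p_t^0/(1-\eta)$ or restriction $\eta\le 1/2$ recovers it. Working out the algebra, your chain requires $p_t^1\ge \tfrac{1}{2(2-\eta)}$ in this case, which is not guaranteed.

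The paper's proof sidesteps this by \emph{centering} the losses: it sets $\bar\ell_t=\sum_y p_t^y\hat\ell_t(y)$ and runs the potential argument on $\ell_t'(y)=\hat\ell_t(y)-\bar\ell_t$ (which leaves the weights $q_t^i$ unchanged and satisfies $\ell_t'\ge -1/\eta$, so $e^z\le 1+z+z^2$ applies to $z=-\eta\ell_t'$). The second-order term then becomes a \emph{variance}, $\sum_y p_t^y(\hat\ell_t(y)-\bar\ell_t)^2\le \sum_y p_t^y(1-p_t^y)\hat\ell_t(y)^2$, and it is precisely the extra factor $(1-p_t^1)$ multiplying $\hat\ell_t(0)^2$ that makes the $y_t=1$ case work; your uncentered expansion produces $p_t^0\hat\ell_t(0)^2=(1-p_t^1)/(p_t^1)^2$ instead, which is too large by a factor of $1/p_t^1$. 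To repair your argument you would need to replace the step $\ln(1-u)\le -u$ by a variance-sensitive bound on $\ln\bigl(1-r(1-e^{-x})\bigr)$ (a Bernstein-type MGF bound for the two-point distribution), which is the centering trick in disguise.
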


Theorem \ref{thm:exp4rb} follows by taking expectations of both sides of the inequality in Lemma \ref{lem:exp4rb}. The full proof can be found in Appendix \ref{app:thmexp}.

\subsection{Proof Sketch of Theorem \ref{thm:agn}}
Given any hypothesis class $\mathcal{H}$, we construct an agnostic online learner under apple tasting feedback with the claimed upper bound on expected regret. Similar to the generic agnostic online learner in the full-information setting \citep{ben2009agnostic}, the high-level strategy is to use the SOA to construct a small set of experts $E$ such that $|E| \leq T^{\operatorname{L}(\mathcal{H})}$ and for every $h \in \mathcal{H}$, there exists an expert $\mathcal{E}_h \in E$ such that $\mathcal{E}_h(x_t) = h(x_t)$ for all $t \in [T]$. Then, our agnostic online learner will run EXP4.AT using this set of experts $E$. The upper bound in Theorem \ref{thm:agn} immediately follows from the guarantee of EXP4.AT in Theorem \ref{thm:exp4rb} and the fact that we have constructed an exact cover of $\mathcal{H}$. The full proof of Theorem \ref{thm:agn} can be found in Appendix \ref{app:agn}.

\section{Discussion and Open Questions} \label{sec:disc}
In this work, we revisited the classical setting of apple tasting and studied learnability from a combinatorial perspective. Our work makes an important step towards developing learning theory for online classification under partial feedback. An important future direction is to extend this work to multiclass classification under various partial feedback models,  such as those captured by feedback graphs \citep{alon2015online}. 

With respect to apple tasting, there are still interesting open questions. For example, our focus in the realizable setting was on \emph{randomized} learnability. Remarkably, under full-information feedback,  randomness is not needed to design online learners with optimal mistake bounds (up to constant factors). It is therefore natural to ask whether randomness is actually needed in the realizable setting under apple tasting feedback. 

\vspace{2pt}
\noindent \textbf{Question 1.}
\noindent For any $\mathcal{H} \subseteq \{0, 1\}^{\mathcal{X}}$ with $\operatorname{W}(\mathcal{H}) < \infty$, is  $\inf_{\text{Deterministic }\mathcal{A}}\, \, \operatorname{M}_{\mathcal{A}}(T, \mathcal{H}) = o(T)?$


In Appendix \ref{sec:updetreal}, we provide some partial answers. We show that if $\operatorname{W}(\mathcal{H}) = 1$ or $\operatorname{L}(\mathcal{H}) = 1$, then such generic deterministic learners do exist with mistake bounds that are constant factors away from the lower bound in Theorem  \ref{lem:reallb}. We conjecture that the statement in the open question is true.

Our lower and upper bounds in the agnostic setting are matching up to a factor logarithmic in $T$. Recently, \cite{alon2021adversarial} showed that in the full-information setting, this $\log(T)$ factor can be removed from the upper bound, meaning that the optimal expected regret in the agnostic setting under full-information feedback is $\Theta(\sqrt{\operatorname{L}(\mathcal{H}) T})$. As an open question, we ask whether it is possible to also remove the factor of $\log(T)$ from our upper bound in Theorem \ref{thm:agn}. 

\vspace{2pt}
\noindent \textbf{Question 2.}
\noindent For any $\mathcal{H} \subseteq \{0, 1\}^{\mathcal{X}}$, is it true that $\inf_{\mathcal{A}}\operatorname{R}_{\mathcal{A}}(T, \mathcal{H}) = \Theta(\sqrt{\operatorname{L}(\mathcal{H})T})?$

\section*{Acknowledgments} AT acknowledges the support of NSF via grant IIS-2007055. VR acknowledges the support of the NSF Graduate Research Fellowship. US acknowledges the support of the Rackham International Student Fellowship.

\bibliographystyle{plainnat}
\bibliography{arxiv-v3/references}

\begin{thebibliography}{16}
\providecommand{\natexlab}[1]{#1}
\providecommand{\url}[1]{\texttt{#1}}
\expandafter\ifx\csname urlstyle\endcsname\relax
  \providecommand{\doi}[1]{doi: #1}\else
  \providecommand{\doi}{doi: \begingroup \urlstyle{rm}\Url}\fi

\bibitem[Alon et~al.(2015)Alon, Cesa-Bianchi, Dekel, and Koren]{alon2015online}
Noga Alon, Nicolo Cesa-Bianchi, Ofer Dekel, and Tomer Koren.
\newblock Online learning with feedback graphs: Beyond bandits.
\newblock In \emph{Conference on Learning Theory}, pages 23--35. PMLR, 2015.

\bibitem[Alon et~al.(2021)Alon, Ben-Eliezer, Dagan, Moran, Naor, and
  Yogev]{alon2021adversarial}
Noga Alon, Omri Ben-Eliezer, Yuval Dagan, Shay Moran, Moni Naor, and Eylon
  Yogev.
\newblock Adversarial laws of large numbers and optimal regret in online
  classification.
\newblock In \emph{Proceedings of the 53rd Annual ACM SIGACT Symposium on
  Theory of Computing}, pages 447--455, 2021.

\bibitem[Bart{\'o}k(2012)]{bartok2012role}
G{\'a}bor Bart{\'o}k.
\newblock The role of information in online learning.
\newblock 2012.

\bibitem[Bart{\'o}k et~al.(2014)Bart{\'o}k, Foster, P{\'a}l, Rakhlin, and
  Szepesv{\'a}ri]{bartok2014partial}
G{\'a}bor Bart{\'o}k, Dean~P Foster, D{\'a}vid P{\'a}l, Alexander Rakhlin, and
  Csaba Szepesv{\'a}ri.
\newblock Partial monitoring—classification, regret bounds, and algorithms.
\newblock \emph{Mathematics of Operations Research}, 39\penalty0 (4):\penalty0
  967--997, 2014.

\bibitem[Ben-David et~al.(2009)Ben-David, P{\'a}l, and
  Shalev-Shwartz]{ben2009agnostic}
Shai Ben-David, D{\'a}vid P{\'a}l, and Shai Shalev-Shwartz.
\newblock Agnostic online learning.
\newblock In \emph{COLT}, volume~3, page~1, 2009.

\bibitem[Cesa-Bianchi and Lugosi(2006)]{cesa2006prediction}
Nicolo Cesa-Bianchi and G{\'a}bor Lugosi.
\newblock \emph{Prediction, learning, and games}.
\newblock Cambridge university press, 2006.

\bibitem[Daniely and Helbertal(2013)]{daniely2013price}
Amit Daniely and Tom Helbertal.
\newblock The price of bandit information in multiclass online classification.
\newblock In \emph{Conference on Learning Theory}, pages 93--104. PMLR, 2013.

\bibitem[Daniely et~al.(2011)Daniely, Sabato, Ben-David, and
  Shalev-Shwartz]{DanielyERMprinciple}
Amit Daniely, Sivan Sabato, Shai Ben-David, and Shai Shalev-Shwartz.
\newblock Multiclass learnability and the erm principle.
\newblock In Sham~M. Kakade and Ulrike von Luxburg, editors, \emph{Proceedings
  of the 24th Annual Conference on Learning Theory}, volume~19 of
  \emph{Proceedings of Machine Learning Research}, pages 207--232, Budapest,
  Hungary, 09--11 Jun 2011. PMLR.

\bibitem[Helmbold et~al.(2000{\natexlab{a}})Helmbold, Littlestone, and
  Long]{helmbold2000apple}
David~P Helmbold, Nicholas Littlestone, and Philip~M Long.
\newblock Apple tasting.
\newblock \emph{Information and Computation}, 161\penalty0 (2):\penalty0
  85--139, 2000{\natexlab{a}}.

\bibitem[Helmbold et~al.(2000{\natexlab{b}})Helmbold, Littlestone, and
  Long]{helmbold2000line}
David~P Helmbold, Nicholas Littlestone, and Philip~M Long.
\newblock On-line learning with linear loss constraints.
\newblock \emph{Information and Computation}, 161\penalty0 (2):\penalty0
  140--171, 2000{\natexlab{b}}.

\bibitem[Littlestone(1987)]{Littlestone1987LearningQW}
Nick Littlestone.
\newblock Learning quickly when irrelevant attributes abound: A new
  linear-threshold algorithm.
\newblock \emph{Machine Learning}, 2:\penalty0 285--318, 1987.

\bibitem[Moran et~al.(2023)Moran, Sharon, Tsubari, and
  Yosebashvili]{moran2023list}
Shay Moran, Ohad Sharon, Iska Tsubari, and Sivan Yosebashvili.
\newblock List online classification.
\newblock In \emph{The Thirty Sixth Annual Conference on Learning Theory},
  pages 1885--1913. PMLR, 2023.

\bibitem[Rakhlin et~al.(2015)Rakhlin, Sridharan, and Tewari]{rakhlin2015online}
Alexander Rakhlin, Karthik Sridharan, and Ambuj Tewari.
\newblock Online learning via sequential complexities.
\newblock \emph{J. Mach. Learn. Res.}, 16\penalty0 (1):\penalty0 155--186,
  2015.

\bibitem[Raman et~al.(2024)Raman, Raman, Subedi, Mehalel, and
  Tewari]{raman24multiclass}
Ananth Raman, Vinod Raman, Unique Subedi, Idan Mehalel, and Ambuj Tewari.
\newblock Multiclass online learnability under bandit feedback.
\newblock In \emph{Proceedings of 35th International Conference on Algorithmic
  Learning Theory}, 2024.
\newblock accepted.

\bibitem[Raman et~al.(2023{\natexlab{a}})Raman, Subedi, and
  Tewari]{raman2023combinatorial}
Vinod Raman, Unique Subedi, and Ambuj Tewari.
\newblock A combinatorial characterization of online learning games with
  bounded losses.
\newblock \emph{arXiv preprint arXiv:2307.03816}, 2023{\natexlab{a}}.

\bibitem[Raman et~al.(2023{\natexlab{b}})Raman, Subedi, and
  Tewari]{raman2023online}
Vinod Raman, Unique Subedi, and Ambuj Tewari.
\newblock Online learning with set-valued feedback.
\newblock \emph{arXiv preprint arXiv:2306.06247}, 2023{\natexlab{b}}.

\end{thebibliography}

\appendix

\section{Upper bounds for Deterministic Learners in the Realizable Setting} \label{sec:updetreal}

In this section, we provide deterministic apple tasting learners for some special classes. Our first contribution shows that when $\operatorname{W}(\mathcal{H}) = 1$, there exists deterministic online learner which makes at most $\operatorname{AL}_1(\mathcal{H})$ mistakes under apple tasting feedback. 

\begin{theorem}[Deterministic Realizable upper bound when $\operatorname{W}(\mathcal{H}) = 1$] \label{thm:detrealupAT1}
\noindent For any $\mathcal{H} \subseteq \{0, 1\}^{\mathcal{X}}$, there exists a deterministic online learner which, under apple tasting feedback, makes at most $\operatorname{AL}_1(\mathcal{H})$ mistakes in the realizable setting.
\end{theorem}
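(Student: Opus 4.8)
The plan is to show that Algorithm~\ref{alg:real_full} instantiated with $w = 1$ — originally designed for full-information feedback — can in fact be run verbatim under apple tasting feedback, with the same mistake bound. First, if $\operatorname{AL}_1(\mathcal{H}) = \infty$ the claim is vacuous, so assume $\operatorname{AL}_1(\mathcal{H}) < \infty$, i.e.\ $\operatorname{W}(\mathcal{H}) = 1$. Run the following learner: maintain a version space $V_t$ with $V_1 = \mathcal{H}$; on receiving $x_t$, if $V_t(x_t) = \{y\}$ predict that $y$, and otherwise predict $\hat{y}_t = 1$ iff $\operatorname{AL}_1(V_t^0) < \operatorname{AL}_1(V_t)$ and $\hat{y}_t = 0$ otherwise. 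When $\hat{y}_t = 1$ the learner observes $y_t$ and sets $V_{t+1} = V_t^{y_t}$; when $\hat{y}_t = 0$ it receives no feedback and simply sets $V_{t+1} = V_t^0$. Since we will show this learner makes no false negative mistakes, the parameter $w_t$ in Algorithm~\ref{alg:real_full} never gets decremented, so restricting to $\operatorname{AL}_1$ throughout is consistent.

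The crux is an induction on $t$ establishing two coupled invariants: the target $h^\star$ (a hypothesis consistent with the realizable stream) lies in $V_t$, and no false negative has occurred through round $t$. The only nontrivial case in the inductive step is a $0$-prediction made in the ``else'' branch, where $V_t(x_t) = \{0, 1\}$ and $\operatorname{AL}_1(V_t^0) \geq \operatorname{AL}_1(V_t) =: d$; note $d \geq 1$ since the depth-one width-one apple tree on $x_t$ is already shattered by $V_t$. If $y_t$ were $1$, then $h^\star \in V_t^1$, and attaching a shattered width-$1$, depth-$d$ apple tree for $V_t^0$ under the left edge of a root labeled $x_t$ and placing $h^\star$ in the leaf under the right edge yields a shattered width-$1$, depth-$(d+1)$ apple tree for $V_t$, contradicting $\operatorname{AL}_1(V_t) = d$ — this is precisely the $w=1$ specialization of the argument in the proof of Lemma~\ref{lem:soaconstrained}. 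Hence $y_t = 0$, so $h^\star \in V_t^0 = V_{t+1}$, no false negative occurs at $t$, and the feedback-free update $V_{t+1} = V_t^0$ coincides with the full-information update $V_{t+1} = V_t^{y_t}$. Crucially this argument references only $V_t$ (and $h^\star \in V_t$), not the label the learner failed to observe, so the induction closes without circularity.

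Given that no mistake is ever a false negative, every mistake is a false positive, which can occur only when $\hat{y}_t = 1$ and $y_t = 0$; but then the prediction rule forced $\operatorname{AL}_1(V_t^0) < \operatorname{AL}_1(V_t)$, and since $y_t = 0$ we have $V_{t+1} = V_t^0$, so $\operatorname{AL}_1(V_t)$ strictly decreases at that round. As $\operatorname{AL}_1$ is monotone under inclusion and version spaces only shrink, $\operatorname{AL}_1(V_t)$ is non-increasing, bounded above by $\operatorname{AL}_1(\mathcal{H}) < \infty$ and below by $0$; hence there are at most $\operatorname{AL}_1(\mathcal{H})$ false positives and therefore at most $\operatorname{AL}_1(\mathcal{H})$ mistakes total. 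The main obstacle is exactly the coupling in the second paragraph: one must verify that the zero-false-negative property and the legitimacy of the feedback-free update are not argued in a circular way, and the resolution is that the apple-tree construction at round $t$ depends only on the current version space, so it can be invoked inside a standard induction.
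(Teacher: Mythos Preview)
Your proof is correct. It is, at the level of the underlying mathematics, the same argument as the paper's, but the packaging is different and worth a remark.

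The paper introduces a fresh, very simple learner for this theorem: predict $1$ whenever some $h\in V_t$ outputs $1$ on $x_t$, and otherwise predict $0$; update $V_t$ only after a $1$-prediction. With this rule it is immediate from realizability that no false negative is possible, and the only thing to prove is that $\operatorname{AL}_1(V_t)$ strictly drops on each false positive, via exactly the apple-tree construction you use. You instead recycle Algorithm~\ref{alg:real_full} at $w=1$ and argue that the $w=1$ specialization of the false-negative analysis in Lemma~\ref{lem:soaconstrained} shows a $0$-prediction in the ``else'' branch would force $\operatorname{AL}_1(V_t)\ge d+1$, hence no false negative, hence the algorithm runs correctly with the feedback-free update $V_{t+1}=V_t^0$. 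This is fine; in fact your argument proves a bit more than you state: since any $h\in V_t^1$ (not just $h^\star$) can sit at the right leaf, the construction shows the branch ``$\operatorname{AL}_1(V_t^0)\ge \operatorname{AL}_1(V_t)$ with $V_t(x_t)=\{0,1\}$'' is \emph{never} taken, so Algorithm~\ref{alg:real_full} at $w=1$ literally coincides with the paper's simpler learner. Either presentation yields the same $\operatorname{AL}_1(\mathcal{H})$ mistake bound by the same potential argument.
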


\begin{proof}
    We will show that Algorithm \ref{alg:detAT1learner} makes at most  $\operatorname{AL}_1(\mathcal{H})$ mistakes in the realizable setting. 

    \begin{algorithm}\label{alg:detAT1learner}
    \setcounter{AlgoLine}{0}
    \caption{Deterministic Realizable Algorithm For Apple Tasting}\label{alg:detAT1learner}
    \KwIn{$V_1 = \mathcal{H}$} 
    
    \For{$t = 1,...,T$} {
        Receive $x_t$.
        
        If there exists $h \in V_{t}$ such that $h(x_t) = 1$, predict $\hat{y}_t = 1$. Else, predict $\hat{y}_t = 0$.     
    
        If $\hat{y}_t = 1$, receive $y_t$ and update $V_{t+1} \leftarrow \{h \in V_{t}: h(x_t) = y_t\}$ 
    }
        
    \end{algorithm}

    Let $t \in [T]$ be any round such that $\hat{y}_t \neq y_t$. We will show $\operatorname{AL}_{1}(V_{t+1}) \leq \operatorname{AL}_{1}(V_{t}) - 1$. By the prediction strategy and the fact that we are in the realizable setting, if  $\hat{y}_t \neq y_t$ then it must be the case that $\hat{y}_t = 1$ but $y_t = 0$. For the sake of contradiction, suppose that $\operatorname{AL}_{1}(V_{t+1}) = \operatorname{AL}_{1}(V_{t}) = d$. Then, there exists an AL tree $\mathcal{T}$ of width $1$ and depth $d$ shattered by $V_{t+1}$. Consider a new AL tree $\mathcal{T}^{\prime}$ of width $1$ where the root node labeled is $x_t$ and the left subtree of the root node is $\mathcal{T}$. Note that $\mathcal{T}^{\prime}$ is a width $1$ AL tree with depth $d + 1$. Since $\hat{y}_t = 1$, there exists a hypothesis $h \in V_t$ such that $h(x_t) = 1$. Moreover, for every hypothesis in $h \in V_{t+1} \subset V_t$, we have that $h(x_t) = 0$. Since $\mathcal{T}$ is shattered by  $ V_{t+1} \subset V_t$ and $\mathcal{T}$ is the left subtree of the root node in $\mathcal{T}^{\prime}$, we have that $\mathcal{T}^{\prime}$ is an AL tree of width $1$ and depth $d+1$ shattered by $V_{t}$. However, this contradicts our assumption that $\operatorname{AL}_{1}(V_{t}) = d$. Thus, it must be the case $\operatorname{AL}_{1}(V_{t+1}) \leq \operatorname{AL}_{1}(V_{t}) - 1$ whenever the algorithm errs, and the algorithm can err at most $\operatorname{AL}_1(\mathcal{H})$ times before $\operatorname{AL}_1(V_t) = 0$.
\end{proof}

We extend the results of Theorem \ref{thm:detrealupAT1} to hypothesis classes where $\operatorname{L}(\mathcal{H}) = 1$. Note that $\operatorname{AL}_{1}(\mathcal{H})$ can be much larger than $\operatorname{L}(\mathcal{H})$ even when $\operatorname{L}(\mathcal{H}) = 1$. For example, for the class of singletons $\mathcal{H} = \{x \mapsto \mathbbm{1}\{x = a\}: a \in \mathbbm{N}\}$, we have that $\operatorname{L}(\mathcal{H}) = 1$ but $\operatorname{AL}_1(\mathcal{H}) = \infty$. 

\begin{theorem} [Deterministic realizable upper bound for $\operatorname{L}(\mathcal{H}) = 1$] \label{thm:detrealupL1}
\noindent For any $\mathcal{H} \subseteq \{0, 1\}^{\mathcal{X}}$ such that $\operatorname{L}(\mathcal{H}) = 1$, there exists a deterministic learner which, under apple tasting feedback, makes at most $
1 + 2\sqrt{T}$ mistakes in the realizable setting. 
\end{theorem}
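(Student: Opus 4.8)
The plan is to exhibit an explicit deterministic apple tasting learner for classes with $\operatorname{L}(\mathcal{H}) = 1$ and argue that it makes at most $1 + 2\sqrt{T}$ mistakes on any realizable stream. The starting point is the observation that $\operatorname{L}(\mathcal{H}) = 1$ is a very restrictive structural condition: by Lemma \ref{lem:struct} (iv) we have $\operatorname{W}(\mathcal{H}) \leq \operatorname{L}(\mathcal{H}) + 1 = 2$, so either $\operatorname{W}(\mathcal{H}) = 1$ — in which case Theorem \ref{thm:detrealupAT1} already gives a deterministic learner making at most $\operatorname{AL}_1(\mathcal{H})$ mistakes, and a separate argument is needed only if $\operatorname{AL}_1(\mathcal{H})$ is large — or $\operatorname{W}(\mathcal{H}) = 2$, meaning $\operatorname{AL}_2(\mathcal{H}) < \infty$ while possibly $\operatorname{AL}_1(\mathcal{H}) = \infty$. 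The interesting regime is the latter, exemplified by singletons. The key point is that when $\operatorname{L}(\mathcal{H}) = 1$, the version space behaves almost like a family of singletons: after a single false negative mistake the version space collapses dramatically, and the role of the learner is to manage how aggressively it predicts $1$ before that collapse.

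Concretely, I would run the constrained SOA of Lemma \ref{lem:soaconstrained} with $w = 2$, which under full-information feedback makes at most $w - 1 = 1$ false negative mistake and at most $\operatorname{AL}_2(\mathcal{H})$ false positive mistakes. When $\operatorname{L}(\mathcal{H}) = 1$, Lemma \ref{lem:struct} (iii) gives $\operatorname{AL}_2(\mathcal{H}) = \operatorname{L}(\mathcal{H}) = 1$ (since $2 \geq \operatorname{L}(\mathcal{H}) + 1$), so this deterministic full-information learner makes at most $1$ false negative and at most $1$ false positive mistake. Now invoke the conversion of Lemma \ref{lem:conver}: any deterministic full-information learner making at most $M_-$ false negatives and $M_+$ false positives yields an apple tasting learner with expected mistakes at most $M_+ + 2\sqrt{T M_-}$, i.e.\ at most $1 + 2\sqrt{T}$ here. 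The one subtlety is that Lemma \ref{lem:conver} produces a \emph{randomized} learner, whereas the theorem claims a \emph{deterministic} one; I would need to check that in this special case ($M_- = 1$) the randomization in the Helmbold–Panizza–Warmuth conversion can be derandomized — the conversion randomizes only to decide, on the rounds where the underlying learner predicts $0$, whether to ``explore'' by predicting $1$, and with $M_- = 1$ there is effectively only one phase of exploration to schedule, which can be done by a fixed deterministic schedule (e.g.\ predict $1$ on a prescribed $\sqrt{T}$-sized set of rounds among those where the base learner says $0$).

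I expect the main obstacle to be precisely this derandomization step: verifying that the $1 + 2\sqrt{T}$ bound survives when the exploration schedule is fixed in advance rather than randomized, and handling the interaction between the base learner's (single) false-negative mistake and the deterministic exploration rounds. The cleanest route is probably to not go through Lemma \ref{lem:conver} as a black box at all, but to directly design the algorithm: maintain the version space $V_t$, and whenever the constrained SOA would predict $0$, predict $1$ instead on a fixed subsequence of $\lceil \sqrt{T}\rceil$ rounds (spread evenly, or placed at the first $\lceil\sqrt{T}\rceil$ such rounds), predicting $0$ on the rest; predict $1$ whenever the SOA predicts $1$. The false positives from the SOA phase contribute at most $1$; the extra false positives from forced exploration contribute at most $\lceil\sqrt{T}\rceil$; and because exploration reveals the true label, after the first round on which exploration catches the base learner's impending false negative the version space collapses to a class of Littlestone dimension $0$, so at most $\lceil\sqrt{T}\rceil + 1$ more mistakes of either kind occur before exploration would have covered the gap — a short case analysis balancing these terms against the at-most-one genuine false negative yields the $1 + 2\sqrt{T}$ bound. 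I would present the algorithm explicitly in an \texttt{algorithm} environment mirroring Algorithm \ref{alg:detAT1learner}, then carry out this accounting.
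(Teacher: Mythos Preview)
Your reduction to Lemma \ref{lem:soaconstrained} and the observation that $M_+ = M_- = 1$ when $\operatorname{L}(\mathcal{H}) = 1$ are correct, and you rightly flag that Lemma \ref{lem:conver} only yields a randomized learner. However, your proposed derandomization has a genuine gap: any \emph{time-based} deterministic exploration schedule --- ``explore on the first $\lceil\sqrt{T}\rceil$ rounds where the base learner says $0$'', or ``spread evenly'' --- can be defeated by an adaptive adversary who knows the schedule. Concretely, take $\mathcal{H}_{\text{sing}}$ over $\mathbbm{N}$ with $h^\star = \mathbbm{1}\{x = a\}$. The constrained SOA with $w=2$ predicts $0$ on every instance until it is fed a label $1$. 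On each exploration round the adversary plays some $x_t \neq a$, so the learner observes $y_t = 0$, eliminates a harmless singleton, and learns nothing about $a$; on every non-exploration round the adversary plays $x_t = a$, and the learner suffers a false negative with no feedback. This forces $\Theta(T)$ mistakes regardless of where the $\sqrt{T}$ exploration rounds are placed, so the ``short case analysis'' you allude to cannot close.

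The paper's algorithm sidesteps this by making exploration \emph{hypothesis-dependent} rather than round-dependent. It uses the structural fact that $\operatorname{L}(V_t^0) = 1$ forces $|V_t^1| \leq 1$, and maintains a counter $C(h)$ that increments whenever $h$ is the unique member of $V_t^1$ and the learner predicts $0$; once some $C(h) \geq r$, the learner predicts $1$ the next time $h \in V_t^1$. Every false negative has $h^\star$ as the sole element of $V_t^1$ and hence increments $C(h^\star)$, capping false negatives at $r$; each false positive from this rule eliminates a hypothesis whose counter took $r$ rounds to fill, capping false positives at $T/r$. The key difference from your proposal is that the adversary cannot generate a false negative without also advancing $C(h^\star)$, whereas a round-based schedule lets the adversary decouple mistakes from exploration entirely.
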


\begin{proof}
    We will show that Algorithm \ref{alg:detL1learner} makes at most $1 + 2\sqrt{T}$ mistakes in the realizable setting under apple tasting feedback after tuning $r$.

    \begin{algorithm}
    \label{alg:detL1learner}
    \setcounter{AlgoLine}{0}
    \caption{Deterministic Realizable Algorithm For Apple Tasting}\label{alg:detL1learner}
    \KwIn{ $V_1 = \mathcal{H}$ and $r > 0$}

    \textbf{Initialize:} $C(h) = 0$ for all $h \in \mathcal{H}$
    
    \For{$t = 1,...,T$} {
    Receive example $x_t$
    
    For  each $y \in \{0,1\} $, define $V_{t}^{y} = \{h \in V_{t} \, \mid h(x_t) = y\}$. 

    \uIf{$V_t(x_t) = \{y\}$} {
        Predict $\hat{y}_t = y$
    }

    \uElseIf{$\operatorname{L}(V_t^0) = 0$} {
    
    Predict $\hat{y}_t = 1$
    
    Observe true label $y_t$
    
    Update $V_{t+1} = V_t^{y_t}$
    
    }\uElseIf{$\exists h \in V_{t}^1$ such that $C(h) \geq r$} {
    
        Predict $\hat{y}_t = 1$
        
        Observe true label $y_t$
        
        Update $V_{t+1} = V_t^{y_t}$ 
    } \uElse{
        Predict $\hat{y}_t = 0$

        \For{$h \in V_t^1$} {
           {                
                Update $C(h) \mathrel{{+}{=}} 1$
            }
        }
        
        Set $V_{t+1} = V_t$
            }
        }
            
    \end{algorithm}

    Let $S = (x_1, h^{\star}(x_t)), ..., (x_T, h^{\star}(x_T))$ be the stream observed by the learner, where $h^{\star} \in \mathcal{H}$ is the optimal hypothesis. As in the proof of Lemma \ref{lem:conver}, consider splitting the stream into the following three parts. Let $S_1$ denote those rounds where $\operatorname{L}(V_t^0) = 0$ but $y_t = 0$. Let $S_2$ denote the rounds where $\operatorname{L}(V_t^0) = 1$, $\hat{y}_t = 1$, but $y_t = 0$. Finally, let $S_3$ denote the rounds where $\operatorname{L}(V_t^0) = 1$, $\hat{y}_t = 0$, but $y_t = 1$. The  number of mistakes Algorithm \ref{alg:detL1learner} makes on the stream $S$ is at most $|S_1| + |S_2| + |S_3|$. We now upper bound each of these terms separately. 

    Starting with $S_1$, observe that if $\operatorname{L}(V_t^0) = 0$, then $|V_t^0| \leq 1$. Thus, if $y_t = 0$, Algorithm \ref{alg:detL1learner} correctly identifies the hypothesis labeling the data stream and does not make any further mistakes. Accordingly, we have that $|S_1| \leq 1$. 

    Next, $|S_2|$ is at most the number of times that Algorithm \ref{alg:detL1learner} predicts $1$ when $\operatorname{L}(V_t^0) = 1$.  Note that if $\operatorname{L}(V_t^0) = 1$ then $|V_t^1| \leq 1$. Thus, by the end of the game, there can be at most $\frac{|\{t: \operatorname{L}(V_t^0) = 1\}|}{r}$ hypothesis $h \in \mathcal{H}$ such that $C(h) \geq r$.  Since Algorithm \ref{alg:detL1learner} only predicts $1$ when there exists a hypothesis in $V_t^1$ with count at least $r$, we have that $|S_2| \leq \frac{|\{t: \operatorname{L}(V_t^0) = 1\}|}{r} \leq \frac{T}{r}$. 

    Finally, we claim that $|S_3| \leq r$. Suppose for the sake of contradiction that $|S_3| \geq r + 1$. Then, by definition, there exists $r+1$ rounds where $\operatorname{L}(V_t^0) = 1$, $\hat{y}_t = 0$ but $y_t = 1$. However, if $\operatorname{L}(V_t^0) = 1$ and $y_t = 1$, then $V_t^1 = \{h^{\star}\}$. Therefore, on the $r+1$'th round where $\operatorname{L}(V_t^0) = 1$, $\hat{y}_t = 0$, and $y_t = 1$, it must be the case $C(h^{\star}) \geq r$. However, if this were true, then the Algorithm would have predicted $\hat{y}_t = 1$ on the $r+1$'th round, a contradiction. Thus, it must be the case that $|S_3| \leq r$.  

    Putting it all together, Algorithm \ref{alg:detL1learner} makes at most $1 + \frac{T}{r} + r$ mistakes. Picking $r = \sqrt{T}$, gives the mistake bound $1 + 2\sqrt{T}$, completing the proof. 
\end{proof}

We highlight that Theorem \ref{thm:detrealupL1} is tight up to constants factors. Indeed, for the class $\mathcal{H}$ of singletons over $\mathbbm{N}$, we have that $\operatorname{W}(\mathcal{H}) = 2$. Therefore, Theorem \ref{thm:real} implies the lower bound of $\frac{\sqrt{T}}{8}$. 

\section{Proof of Lemma \ref{lem:struct}} \label{app:struct} 
To see (i), observe that given any shattered AL tree $\mathcal{T}$ of depth $d$ and width $w_2 > w_1$, we can truncate paths with more than $w_1$ ones to get a shattered AL tree $\mathcal{T}^{\prime}$ of the same depth where now every path has at most $w_1$ ones and the right most path has exactly $w_1$ ones.

To see (ii), consider the case where $w \leq \operatorname{L}(\mathcal{H})$. Then, by property (i), we have that $\operatorname{AL}_{w}(\Hcal) \geq \operatorname{AL}_{\operatorname{L}(\mathcal{H})}(\Hcal) \geq \operatorname{L}(\mathcal{H}) \geq w.$ If $w > \operatorname{L}(\mathcal{H})$, then $\operatorname{AL}_{w}(\Hcal) \geq \operatorname{L}(\mathcal{H})$ which follows from the fact that an AL tree $\mathcal{T}$ of width $w$ and depth $\operatorname{L}(\mathcal{H}) < w$ is a complete binary tree of depth $\operatorname{L}(\mathcal{H})$.

To see (iii), fix $w \geq \operatorname{L}(\Hcal) + 1$. Then, by property (ii), we have that $\operatorname{AL}_{w}(\Hcal) \geq \operatorname{L}(\mathcal{H})$. Thus, it suffices to show that $\operatorname{AL}_w(\mathcal{H}) \leq \operatorname{L}(\Hcal).$ Suppose for the sake of contradiction that $\operatorname{AL}_w(\mathcal{H}) \geq \operatorname{L}(\Hcal) + 1.$ Then, using property (i) and the fact that $w \geq \operatorname{L}(\Hcal) + 1$, we have that $\operatorname{AL}_{\operatorname{L}(\Hcal) + 1}(\Hcal) \geq \operatorname{AL}_{w}(\Hcal) \geq \operatorname{L}(\Hcal) + 1$. Thus, by definition of ALdim, there exists a Littlestone tree of depth $\operatorname{L}(\Hcal) + 1$ shattered by $\mathcal{H}$, a contradiction. 

To see (iv), note that when $\operatorname{L}(\Hcal) < \infty$, we have that $\operatorname{AL}_{\operatorname{L}(\Hcal) + 1}(\Hcal) = \operatorname{L}(\mathcal{H})$ by property (iii). Thus, by definition of the Effective width, it must be the case that $\operatorname{W}(\mathcal{H}) \leq \operatorname{L}(\mathcal{H}) + 1$.

To see (v), it suffices to prove that $\operatorname{L}(\mathcal{H}) = \infty \implies \operatorname{W}(\mathcal{H}) = \infty$ since (iv) shows that $\operatorname{L}(\Hcal) < \infty \implies \operatorname{W}(\Hcal) < 
\infty$. This is true because if $\operatorname{L}(\mathcal{H}) = \infty$, then for any width $w \in \mathbbm{N}$ and depth $d \in \mathbbm{N}$, one can always prune a shattered Littlestone tree of depth $d$ to get a shattered AL tree of depth $d$ and width $w$. 

\section{Proof of Lemma \ref{lem:conver}} \label{app:conv}
\begin{algorithm}
\setcounter{AlgoLine}{0}
\caption{Conversion of Full-Information Algorithm to Apple Tasting Algorithm }\label{alg:conv}
\KwIn{Full-Information Algorithm $\mathcal{A}$, 
 false negative mistake bound $M_{-}$ of $\mathcal{A}$}

\For{$t = 1,...,T$} {
    Receive $x_t$ and query $\mathcal{A}$ to get $\xi_t = \mathcal{A}(x_t)$.

    Draw $r \sim \text{Unif}([0,1])$ and predict 
    \[\hat{y}_t = \begin{cases}
        &1 \quad \quad \text{if } \xi_t  =1.   \\
        & 1 \quad \quad \text{if } \xi_t  = 0 \text{ and } r \leq \sqrt{M_{-}/T}.\\
        &0 \quad \quad \text{otherwise}.
    \end{cases}\]

    If $\hat{y}_t = 1$, receive $y_t$ and update $\mathcal{A}$ by passing $(x_t, y_t)$.

}
    
\end{algorithm}
If $T \leq M_{-}$, then the claimed expected mistake bound is $\geq T$, which trivially holds for any algorithm. So, we only consider the case when $T > M_{-}$. Let $\mathcal{A}$ be a deterministic online learner, which makes at most $M_{-}$ false negative mistakes and at most $M_{+}$ false positive mistakes under full-information feedback. We now show that Algorithm \ref{alg:conv}, a randomized algorithm that uses $\mathcal{A}$ in a black-box fashion, has expected mistake bound at most $M_{+} \, + \, 2\sqrt{T M_{-}}$ in the realizable setting under apple-tasting feedback.

For each bitstring $b \in \{0,1\}^3$, define $S_b = \{ t\in [T] \, \mid b_1 = \xi_t, b_2 = \hat{y}_t, \text{and } b_3 = y_t \}.$ Here, $b_1,b_2, b_3$ are the first, second, and third bits of the bitstring $b$.  Using this notation, we can write the expected mistake bound of Algorithm \ref{alg:conv} as
\[\mathbb{E} \left[ \sum_{t=1}^T \indicator\{\hat{y}_t \neq y_t\}\right] = \mathbb{E} \left[\,  |S_{101}|+  |S_{001}|  + |S_{110}|  + |S_{010}|  \,\right].\]

Since $\hat{y}_t =1$ whenever $\xi_t=1$, we have $|S_{101}| =0$. 
Note that $|S_{001}| \leq N$, where $N$ is the number of failures before $M_{-}$ successes in independent Bernoulli trials with probability $\sqrt{M_{-}/T}$ of success. That is, $N$ quantifies the number of rounds before $\xi_t$ is flipped $M_{-}$ number of times from $0$ to $1$  in rounds when $y_t=1$. Recalling that $N \sim \text{Negative-Binomial}(M_{-}, \sqrt{M_{-}/T})$, we have
\[\mathbb{E}[|S_{001}|] \leq\mathbb{E}[N]  \leq M_{-}\left( \sqrt{\frac{T}{M_{-}}}-1 \right) \leq \sqrt{M_{-}T} - M_{-}. \]
 Moreover, using the fact that $\mathcal{A}$ makes at most $M_{+}$ false positive mistakes, we have $|S_{110}| \leq M_{+}$. 
 
 Finally, using the prediction rule in Algorithm \ref{alg:conv}, we have 
\[\mathbb{E}[|S_{010}|] \leq \mathbb{E} \left[ \sum_{t=1}^{T} \indicator\{\xi_t =0 \,\, \text{ and } \,\, \hat{y}_t =1\}\right] \leq \mathbb{E} \left[ \sum_{t=1}^{T} \indicator\left\{r \leq \sqrt{\frac{M_{-}}{T}} \right\}\right] \leq T \, \sqrt{\frac{M_{-}}{T}} = \sqrt{M_{-} T}. \]
Putting everything together, we have 
\[\mathbb{E} \left[ \sum_{t=1}^T \indicator\{\hat{y}_t \neq y_t\}\right] \leq \sqrt{M_{-}T} - M_{-} + M_{+} + \sqrt{M_{-}T} \leq M_{+} + 2\sqrt{M_{-}T}. \]
 This completes our proof.

\section{Proof of Lemma \ref{lem:exp4rb}} \label{app:experts}
Observe that $\hat{\ell}_t(y) \leq \frac{1}{\eta}$ for $y \in \{0, 1\}$ since $p_t^1 \geq \eta$. Let $\bar{\ell}_t = \sum_{y \in \{0, 1\}} p_t^y\hat{\ell}_t(y)$ and define $\ell^{\prime}_t$ such that $\ell^{\prime}_t(y) = \hat{\ell}_t(y) - \bar{\ell}_t$ for all $y \in \{0, 1\}$. Notice that executing EXP4.AT on the loss vectors $\hat{\ell}_1, ..., \hat{\ell}_T$ is equivalent to executing EXP4.AT on the loss vectors $\ell^{\prime}_1, ..., \ell^{\prime}_T$. Indeed, since $\bar{\ell}_t$ is constant over the experts, the weights $q_{t}^i$ remained unchanged regardless of whether $\ell^{\prime}_t$ or $\hat{\ell}_t$ is used to update the experts. Moreover, we have that $\ell^{\prime}_t(y) \geq -\frac{1}{\eta}$.

 We start by following the standard analysis of exponential weighting schemes.  Let $w_1^i = 1$, $w_{t+1}^i = w_t^i \text{exp}(-\eta \ell^{\prime}_t(\mathcal{E}_t^i))$, and $W_t = \sum_{i=1}^N w_t^i$.  Then, $q_t^i = \frac{w_t^i}{W_t}$ and we have
\begin{align*}
 \frac{W_{t+1}}{W_t} &= \sum_{i=1}^N \frac{w_{t+1}^i}{W_t} \\
        &= \sum_{i=1}^N \frac{w_{t}^i\text{exp}(-\eta \ell^{\prime}_t(\mathcal{E}_t^i))}{W_t} \\
        &= \sum_{i=1}^N q_t^i\text{exp}(-\eta \ell^{\prime}_t(\mathcal{E}_t^i)) \\
        &\leq \sum_{i=1}^N q_t^i\left(1 -\eta (\ell^{\prime}_t(\mathcal{E}_t^i)) + \eta^2 (\ell^{\prime}_t(\mathcal{E}_t^i))^2\right) \\
        &= 1 - \eta \sum_{i=1}^N q_t^i \ell^{\prime}_t(\mathcal{E}_t^i) + \eta^2 \sum_{i=1}^N q_t^i (\ell^{\prime}_t(\mathcal{E}_t^i))^2, 
\end{align*}

where the inequality follows from the fact that $\ell^{\prime}_t(\mathcal{E}_t^i) \geq -\frac{1}{\eta}$ and $e^x \leq 1 + x + x^2$ for all $x \leq 1$. Taking logarithms, summing over $t$, and using the fact that $\ln(1-x) \leq -x$ for all $x \geq 0$ we get

$$\ln\frac{W_{T+1}}{W_1} \leq - \eta \sum_{t=1}^T \sum_{i=1}^N q_t^i \ell^{\prime}_t(\mathcal{E}_t^i) + \eta^2 \sum_{t=1}^{T} \sum_{i=1}^N q_t^i (\ell^{\prime}_t(\mathcal{E}_t^i))^2.$$

Also, for any expert $j \in [N]$, we have

$$\ln\frac{W_{T+1}}{W_1} \geq \ln \frac{w_{T+1}^j}{W_1} = -\eta \sum_{t=1}^T \ell^{\prime}_t(\mathcal{E}_t^j)) - \ln N. $$

Combining this with the upper bound on $\ln\frac{W_{T+1}}{W_1}$, rearranging, and dividing by $\eta$, we get

$$\sum_{t=1}^T \sum_{i=1}^N q_t^i \ell^{\prime}_t(\mathcal{E}_t^i) \leq \sum_{t=1}^T \ell^{\prime}_t(\mathcal{E}_t^j) + \frac{\ln N}{\eta} + \eta \sum_{t=1}^{T} \sum_{i=1}^N q_t^i (\ell^{\prime}_t(\mathcal{E}_t^i))^2.$$


Using the definition of $\ell^{\prime}_t$, we further have that 

$$\sum_{t=1}^T \sum_{i=1}^N q_t^i \hat{\ell}_t(\mathcal{E}_t^i) \leq \sum_{t=1}^T \hat{\ell}_t(\mathcal{E}_t^j) + \frac{\ln N}{\eta} + \eta \sum_{t=1}^{T} \sum_{i=1}^N q_t^i (\ell^{\prime}_t(\mathcal{E}_t^i))^2.$$

Next, observe that 
$$ \sum_{i=1}^N q_t^i \hat{\ell}_t(\mathcal{E}_t^i)  = \left(\sum_{i=1}^N q_t^i \mathcal{E}_t^i\right) \hat{\ell}_t(1) + \left(1 - \sum_{i=1}^N q_t^i \mathcal{E}_t^i\right) \hat{\ell}_t(0) = \frac{1}{1-\eta}\sum_{y \in \{0, 1\}} p_t^y \hat{\ell}_t(y) - \frac{\eta}{1-\eta} \hat{\ell}_t(1).$$

Moreover,
\begin{align*}
 \sum_{i=1}^N q_t^i (\ell^{\prime}_t(\mathcal{E}_t^i))^2 &= \sum_{i=1}^N q_t^i \left(\sum_{y \in \{0, 1\}} \mathbbm{1}\{y = \mathcal{E}_t^i\} \ell^{\prime}_t(y) \right)^2 \\
        & = \sum_{i=1}^N q_t^i \left(\sum_{y \in \{0, 1\}}\mathbbm{1}\{y = \mathcal{E}_t^i\} \ell^{\prime}_t(y)^2 \right)\\
        &= \sum_{y \in \{0, 1\}} \left(\sum_{i=1}^N q_t^i \mathbbm{1}\{y = \mathcal{E}_t^i\}\right) \ell^{\prime}_t(y)^2 \\
         &= \left(\sum_{i=1}^N q_t^i \mathcal{E}_t^i\right) \ell^{\prime}_t(1)^2 + \left(1 - \sum_{i=1}^N q_t^i \mathcal{E}_t^i\right) \ell^{\prime}_t(0)^2 \\
         &\leq \frac{1}{1-\eta}\sum_{y \in \{0, 1\}} p_t^y \ell^{\prime}_t(y)^2.
\end{align*}
Therefore, for any fixed expert $j$, 
    $$\frac{1}{1-\eta} \sum_{t=1}^T \sum_{y \in \{0, 1\}} p_t^y \hat{\ell}_t(y) - \frac{\eta}{(1 - \eta)} \sum_{t=1}^T \hat{\ell}_t(1) \leq  \sum_{t=1}^T \hat{\ell}_t(\mathcal{E}_t^{j}) + \frac{\ln N}{\eta} + \frac{\eta}{1-\eta}\sum_{t=1}^T \sum_{y \in \{0, 1\}} p_t^y \ell^{\prime}_t(y)^2.$$ 

Multiplying by $1 - \eta$ and rearranging, we have 

$$\sum_{t=1}^T \sum_{y \in \{0, 1\}} p_t^y \hat{\ell}_t(y) - (1-\eta) \sum_{t=1}^T \hat{\ell}_t(\mathcal{E}_t^{j}) \leq \frac{(1 - \eta)\ln N}{\eta} + \eta \sum_{t=1}^T \hat{\ell}_t(1) + \eta\sum_{t=1}^T \sum_{y \in \{0, 1\}} p_t^y \ell^{\prime}_t(y)^2$$ which further implies the guarantee:

\begin{align*}
\sum_{t=1}^T \sum_{y \in \{0, 1\}} p_t^y \hat{\ell}_t(y) - \sum_{t=1}^T  \hat{\ell}_t(\mathcal{E}_t^{j}) &\leq \frac{\ln N}{\eta} + \eta \sum_{t=1}^T \hat{\ell}_t(1) + \eta\sum_{t=1}^T \sum_{y \in \{0, 1\}} p_t^y \ell^{\prime}_t(y)^2 \\
&=  \frac{\ln N}{\eta} + \eta \sum_{t=1}^T \hat{\ell}_t(1) + \eta\sum_{t=1}^T \sum_{y \in \{0, 1\}} p_t^y (\hat{\ell}_t(y) - \bar{\ell}_t)^2.
\end{align*}

Next, note that 

\begin{align*}
 \sum_{y \in \{0, 1\}} p_t^y (\hat{\ell}_t(y)-\bar{\ell}_t)^2 &= \sum_{y \in \{0, 1\}} p_t^y \hat{\ell}_t(y)^2 -\left(\sum_{y \in \{0, 1\}} p_t^y\hat{\ell}_t(y)\right)^2 \\
        &\leq \sum_{y \in \{0, 1\}} p_t^y \hat{\ell}_t(y)^2 -\sum_{y \in \{0, 1\}} (p_t^y)^2 \hat{\ell}_t(y)^2 \\
        &= \sum_{y \in \{0, 1\}} p_t^y (1 - p_t^y) \hat{\ell}_t(y)^2 \\
        &\leq p_t^1(1 - p_t^1) \hat{\ell}_t(0)^2 + p_t^1 \hat{\ell}_t(1)^2,
\end{align*}

where the first inequality is true because of the nonnegativity of the losses $\hat{\ell}_t$ and the last inequality is true because $0 \leq p_t^1 \leq 1$. Putting things together, we have that 

$$ \sum_{t=1}^T \sum_{y \in \{0, 1\}} p_t^y \hat{\ell}_t(y) - \sum_{t=1}^T \hat{\ell}_t(\mathcal{E}_t^{j}) \leq \frac{\ln N}{\eta} + \eta \sum_{t=1}^T \hat{\ell}_t(1) + \eta\sum_{t=1}^T p_t^1 (1 - p_t^1) \hat{\ell}_t(0)^2 + \eta\sum_{t=1}^T p_t^1 \hat{\ell}_t(1)^2.$$

\noindent Since expert $j \in [N]$ was arbitrary, this completes the proof.

\section{Proof of Theorem \ref{thm:exp4rb}} \label{app:thmexp}
    From Lemma \ref{lem:exp4rb}, we have that for an fixed expert $j \in [N]$

    $$\sum_{t=1}^T \sum_{y \in \{0, 1\}} p_t^y \hat{\ell}_t(y) - \sum_{t=1}^T \hat{\ell}_t(\mathcal{E}_t^{j}) \leq \frac{\ln N}{\eta} + \eta \sum_{t=1}^T \hat{\ell}_t(1) + \eta\sum_{t=1}^T p_t^0 p_t^1 \hat{\ell}_t(0)^2 + \eta\sum_{t=1}^T p_t^1 \hat{\ell}_t(1)^2.$$

    Taking expectations on both sides and using the fact that $\mathbb{E}_{t}\left[\hat{\ell}_t(y) \right] = \mathbbm{1}\{y \neq y_t\}$,   $\mathbb{E}_t\left[\hat{\ell}_t(y)^2 \right] = \frac{\mathbbm{1}\{y \neq y_t\}}{p_t^1}$ gives
\begin{align*}
    \mathbb{E}\left[\sum_{t=1}^T \mathbbm{1}\{\hat{y}_t \neq y_t\} \right] -  \sum_{t=1}^T \mathbbm{1}\{\mathcal{E}_t^j \neq y_t\} &\leq \frac{\ln N}{\eta} + \eta \sum_{t=1}^T \mathbbm{1}\{1 \neq y_t\} + \eta \sum_{t=1}^T \mathbb{E}\left[p_t^0 p_t^1 \frac{\mathbbm{1}\{0 \neq y_t\}}{p_t^1} + p_t^1 \frac{\mathbbm{1}\{1 \neq y_t\}}{p_t^1}\right] \\
    &\leq \frac{\ln N}{\eta} + 2\eta T.
\end{align*}

    Substituting $\eta =  \sqrt{\frac{\ln N}{2T}}$, we have 

    $$\mathbb{E}\left[\sum_{t=1}^T \mathbbm{1}\{\hat{y}_t \neq y_t\} \right] -  \sum_{t=1}^T \mathbbm{1}\{\mathcal{E}_t^j \neq y_t\} \leq 2 \sqrt{2T\ln N} \leq 3\sqrt{T \ln N},$$

    which completes the proof. 

\section{Proof of Theorem \ref{thm:agn}} \label{app:agn}

Let $(x_1, y_1), ..., (x_T, y_T)$ denote the stream of labeled instances to be observed by the agnostic learner and let $h^{\star} = \argmin_{h \in \mathcal{H}} \sum_{t=1}^T \mathbbm{1}\{h(x_t) \neq y_t\}$ be the optimal hypothesis in hindsight. Given the time horizon $T$, let $L_T = \{L \subset [T]: |L| \leq \operatorname{L}(\mathcal{H})\}$ denote the set of all possible subsets of $[T]$ of size of $\operatorname{L}(\mathcal{H})$. For every $L \in L_T$, define an expert $\mathcal{E}_L$, whose prediction on time point $t \in [T]$ on instance $x_t$ is defined by

$$\mathcal{E}_L(x_t) = \begin{cases}
			\text{SOA}\left(x_t| \{(x_i, \mathcal{E}_L(x_i))\}_{i =1}^{t-1} \right), & \text{if $t \notin L$}\\
            \neg \text{SOA}\left(x_t| \{(x_i, \mathcal{E}_L(x_i))\}_{i =1}^{t-1} \right), & \text{otherwise}
		 \end{cases}$$

where $\text{SOA}\left(x_t| \{(x_i, \mathcal{E}_L(x_i))\}_{i =1}^{t-1} \right)$ denotes the prediction of the SOA on the instance $x_t$  after running and updating on the labeled stream $\{(x_i, \mathcal{E}_L(x_i))\}_{i =1}^{t-1}$. Let $E = \{\mathcal{E}_L: L \in L_T\}$ denote the set of all Experts parameterized by subsets $L \in L_T$. Observe that $|E| \leq T^{\operatorname{L}(\mathcal{H})}$. 

We claim that there exists an expert $\mathcal{E}_{L^{\star}} \in E$ such that for all $t \in [T]$, we have that $\mathcal{E}_{L^{\star}}(x_t) = h^{\star}(x_t)$. To see this, consider the hypothetical stream of instances labeled by the optimal hypothesis $S^{\star} = \{(x_t, h^{\star}(x_t))\}_{t=1}^T$. Let $L^{\star} = \{t_1, t_2, ... \}$ be the indices on which the SOA would have made mistakes had it run and updated on $S^*$. By the guarantee of SOA, we know that $|L^{\star}| \leq \operatorname{L}(\mathcal{H})$. By construction of $E$, there exists an expert $\mathcal{E}_{L^{\star}}$ parameterized by $L^{\star}$. We claim that for all $t \in [T]$, we have that $\mathcal{E}_{L^{\star}}(x_t) = h^{\star}(x_t)$. This follows by strong induction on $t \in [T]$. For the base case $t = 1$, there are two subcases to consider. If $1 \in  L^{\star}$, then we have that $\mathcal{E}_{L^{\star}}(x_1) = \neg \text{SOA}\left(x_1| \{\} \right) = h^{\star}(x_1)$, by definition of $L^{\star}$. If $1 \notin L^{\star}$, then $\mathcal{E}_{L^{\star}}(x_1) = \text{SOA}\left(x_1| \{\} \right) = h^{\star}(x_1)$ also by definition of $L^{\star}$. Now for the induction step, suppose that $\mathcal{E}_{L^{\star}}(x_i) = h^{\star}(x_i)$ for all $i \leq t$. Then, if $t+1 \in L^{\star}$, we have that $\mathcal{E}_{L^{\star}}(x_{t+1}) = \neg \text{SOA}\left(x_{t+1}| \{(x_i, \mathcal{E}_{L^{\star}}(x_i))\}_{i=1}^{t} \right) = \neg \text{SOA}\left(x_{t+1}| \{(x_i, h^{\star}(x_i))\}_{i=1}^{t} \right) = h^{\star}(x_{t+1}).$ If $t+1 \notin L^{\star}$, then $\mathcal{E}_{L^{\star}}(x_{t+1}) = \text{SOA}\left(x_{t+1}| \{(x_i, \mathcal{E}_{L^{\star}}(x_i))\}_{i=1}^{t} \right) = \text{SOA}\left(x_{t+1}| \{(x_i, h^{\star}(x_i))\}_{i=1}^{t} \right) = h^{\star}(x_{t+1}).$  The final equality in both cases are due to the definition of $L^{\star}$.  

Now, consider the agnostic online learner $\mathcal{A}$ that runs EXP4.AT using $E$. By Theorem \ref{thm:exp4rb}, we have that

\begin{align*}
\mathbbm{E}\left[\sum_{t=1}^T \mathbbm{1}\{\mathcal{A}(x_t) \neq y_t\} \right] &\leq \inf_{\mathcal{E} \in E} \sum_{t=1}^T \mathbbm{1}\{\mathcal{E}(x_t) \neq y_t\} + 3\sqrt{T\ln{|E|}}\\
&\leq \sum_{t=1}^T \mathbbm{1}\{\mathcal{E}_{L^{\star}}(x_t) \neq y_t\} + 3\sqrt{\operatorname{L}(\mathcal{H})T\ln{T}}\\
&= \sum_{t=1}^T \mathbbm{1}\{h^{\star}(x_t) \neq y_t\} + 3\sqrt{\operatorname{L}(\mathcal{H})T\ln{T}}.
\end{align*}

Thus, $\mathcal{A}$ achieves the stated upper bound on expected regret under apple tasting feedback, which completes the proof.  

\section{Effective width of the $k$-wise generalization of $\Hcal_{\text{sing}}$} \label{app:examples}


In this section, we compute the Effective width of the $k$-wise generalization of the class of singletons $\Hcal_{\text{sing}}$. 

\begin{proposition} Let $\Xcal = \mathbbm{N}$ and $\Hcal_k = \{x \mapsto \mathbbm{1}\{x \in A\} : A \subset \mathbbm{N}, |A| \leq k\}$. Then, $\operatorname{W}(\Hcal) = k+1$ and $\operatorname{AL}_{\operatorname{W}(\Hcal)} = 0$.
\end{proposition}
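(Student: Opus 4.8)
The claim has two parts: (a) $\operatorname{W}(\Hcal_k) = k+1$, and (b) $\operatorname{AL}_{k+1}(\Hcal_k) = 0$. The natural order is to prove (b) first and $\operatorname{AL}_k(\Hcal_k) = \infty$ second, since together these immediately yield (a) by the definition of Effective width (the smallest $w$ with $\operatorname{AL}_w$ finite): (b) gives $\operatorname{AL}_{k+1}(\Hcal_k) = 0 < \infty$, and $\operatorname{AL}_k(\Hcal_k) = \infty$ rules out every $w \le k$ via Lemma~\ref{lem:struct}(i) (monotonicity: $\operatorname{AL}_{w}(\Hcal_k) \geq \operatorname{AL}_k(\Hcal_k) = \infty$ for $w \le k$).

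\textbf{Step 1: $\operatorname{AL}_{k+1}(\Hcal_k) = 0$.} I would show there is no shattered AL tree of width $k+1$ and depth $1$, i.e. no single instance $x \in \mathbbm{N}$ admits both a hypothesis in $\Hcal_k$ labeling it $0$ \emph{and} one labeling it $1$ --- wait, that is false ($x \mapsto \mathbbm{1}\{x \in \{x\}\}$ labels $x$ as $1$, and $x \mapsto \mathbbm{1}\{x \in \emptyset\}$ labels it $0$). So the obstruction must come from the \emph{width} constraint interacting with depth $1$, which it cannot at depth $1$. The resolution: an apple tree of width $w \geq d$ is a \emph{complete} binary tree of depth $d$, so $\operatorname{AL}_{k+1}(\Hcal_k) \geq 1$ would force a depth-$1$ complete binary tree shattered by $\Hcal_k$, which does exist. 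Hence $\operatorname{AL}_{k+1}(\Hcal_k) \ne 0$ as I first thought --- I must recheck the definition. Re-reading: $\operatorname{AL}_w(\Hcal) = 0$ means "there are no shattered apple trees of width $w$", and an apple tree must have depth $d \geq 1$. So I need to rule out \emph{all} shattered width-$(k+1)$ apple trees, including the complete depth-$1$ one. The point is that the proposition's claim $\operatorname{AL}_{\operatorname{W}(\Hcal_k)} = 0$ combined with $\operatorname{W}(\Hcal_k) = k+1$ forces $\operatorname{AL}_{k+1}(\Hcal_k) = 0$, which by the above means no width-$(k+1)$ apple tree of any depth (including depth $1$) is shattered. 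But a depth-$1$ complete binary tree \emph{is} shattered by $\Hcal_k$ for $k \geq 1$. This tension signals that I should interpret the intended statement as $\operatorname{AL}_{\operatorname{W}(\Hcal_k)}(\Hcal_k)$ being the value that, together with Theorem~\ref{thm:real}, gives a $\Theta(\sqrt{kT})$ rate; I would carefully recompute $\operatorname{AL}_{k+1}(\Hcal_k)$ directly.

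\textbf{Step 2 (the real computation): $\operatorname{AL}_{k+1}(\Hcal_k)$.} Along any root-to-leaf path $\sigma$ in a shattered AL tree, the hypothesis $h_\sigma$ must satisfy $h_\sigma(x_i) = \sigma_i$; since $h_\sigma = \mathbbm{1}\{\cdot \in A\}$ with $|A| \leq k$, the path can have at most $k$ ones. A width-$(k+1)$ apple tree of depth $d > k+1$ has paths with up to $k+1$ ones (the rightmost path), which cannot be shattered. So $\operatorname{AL}_{k+1}(\Hcal_k) \leq k+1$. For the lower bound, I would explicitly build a shattered width-$(k+1)$ apple tree: since width $k+1 \geq$ depth only when depth $\leq k+1$, and an apple tree of width $\geq$ depth is complete, I'd ask whether $\Hcal_k$ shatters a complete binary tree of depth $k+1$ --- but $\operatorname{L}(\Hcal_k) = k$ (a standard fact: $k$ singletons' worth of "yes" answers can be absorbed, a $(k+1)$st forces inconsistency), so the deepest shattered complete tree has depth $k$, giving $\operatorname{AL}_{k+1}(\Hcal_k) \geq \operatorname{L}(\Hcal_k) = k$ via Lemma~\ref{lem:struct}(ii)--(iii). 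Combined with Lemma~\ref{lem:struct}(iii) ($\operatorname{AL}_w = \operatorname{L}$ for $w \geq \operatorname{L}+1 = k+1$), this gives $\operatorname{AL}_{k+1}(\Hcal_k) = k$ exactly. I now believe the proposition as literally stated ($\operatorname{AL}_{\operatorname{W}} = 0$) is either a typo for $k$, or uses a convention where the "$-1$" shift in Theorem~\ref{thm:real} is already folded in; my plan is to prove $\operatorname{W}(\Hcal_k) = k+1$ rigorously and compute $\operatorname{AL}_{k+1}(\Hcal_k)$, flagging the discrepancy.

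\textbf{Step 3: $\operatorname{AL}_k(\Hcal_k) = \infty$, hence $\operatorname{W}(\Hcal_k) = k+1$.} For arbitrary depth $d$, I would construct a shattered width-$k$ AL tree on fresh natural-number labels: recursively, the left subtree (edge $0$) reuses the construction with the same budget, the right subtree (edge $1$) uses budget $k-1$; a path with $j \leq k$ ones at positions labeled by distinct instances $x_{i_1}, \dots, x_{i_j}$ is witnessed by $\mathbbm{1}\{\cdot \in \{x_{i_1}, \dots, x_{i_j}\}\} \in \Hcal_k$, and the zero-labeled instances on the path are all distinct from these (fresh labels), so consistency holds. This gives shattered width-$k$ apple trees of every depth, so $\operatorname{AL}_k(\Hcal_k) = \infty$; by monotonicity $\operatorname{AL}_w(\Hcal_k) = \infty$ for all $w \leq k$, while $\operatorname{AL}_{k+1}(\Hcal_k) = k < \infty$, so $\operatorname{W}(\Hcal_k) = k+1$.

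\textbf{Main obstacle.} The conceptual obstacle is reconciling the stated "$\operatorname{AL}_{\operatorname{W}(\Hcal)} = 0$" with the definitions, which (as the analysis above shows) actually give $\operatorname{AL}_{k+1}(\Hcal_k) = k$; I expect the resolution to be a typographical or conventional matter, and the genuinely technical work is the explicit recursive construction of the arbitrarily deep shattered width-$k$ apple tree (Step 3), where one must manage disjointness of labels across subtrees so that each path's witness hypothesis has support size $\leq k$ and is consistent with the zero-labels.
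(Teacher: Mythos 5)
Your proof of $\operatorname{W}(\Hcal_k)=k+1$ is correct and follows essentially the paper's route: an arbitrarily deep shattered width-$k$ AL tree (the paper simply labels every internal node at level $i$ by the instance $i$, which makes the disjointness bookkeeping in your Step 3 automatic), plus the observation that width-$(k+1)$ trees of large depth force a path with $k+1$ ones.

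More importantly, the discrepancy you flag is real: the second claim of the proposition, $\operatorname{AL}_{\operatorname{W}(\Hcal_k)}(\Hcal_k)=0$, is false as stated, and your resolution $\operatorname{AL}_{k+1}(\Hcal_k)=k$ is the correct one. The paper's argument asserts that shattering \emph{any} width-$(k+1)$ AL tree requires a hypothesis outputting $1$ on $k+1$ distinct instances, but this only holds for trees of depth $d\geq k+1$; by definition, a width-$(k+1)$ apple tree of depth $d\leq k$ is a complete binary tree of depth $d$, and since $\operatorname{L}(\Hcal_k)=k$ such trees are shattered for every $d\leq k$. Hence $\operatorname{AL}_{k+1}(\Hcal_k)\geq k$, and indeed $\operatorname{AL}_{k+1}(\Hcal_k)=k$ by Lemma \ref{lem:struct}(iii) (equivalently, the claimed value $0$ would contradict Lemma \ref{lem:struct}(ii)). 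Note that the conclusion $\operatorname{W}(\Hcal_k)=k+1$ is unaffected, since all that is needed is $\operatorname{AL}_{k+1}(\Hcal_k)<\infty$ together with $\operatorname{AL}_{k}(\Hcal_k)=\infty$; the corrected value also gives the sensible instantiation $k+2\sqrt{kT}$ of the upper bound in Theorem \ref{thm:real}, rather than $2\sqrt{kT}$ with no additive term. Your proposal is therefore correct, and it repairs rather than reproduces the paper's argument for the second claim.
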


\begin{proof} Consider an Apple Littlestone tree $\Tcal(w, d)$ of width $w = k$ and depth $d \geq w$ such that all the internal nodes on level $i \in [d]$ are labeled by the instance $i \in \mathbbm{N}$. It is not too hard to see that $\Hcal_k$ shatters $\Tcal(w, d)$. Since $d \geq w$ was chosen arbitrarily, this is true for all $d \in \mathbbm{N}$ and thus $\operatorname{AL}_k(\Hcal_k) = \infty$. On the other hand, consider an Apple Littlestone tree $\Tcal^{\prime}(w^{\prime}, d)$ of width $w^{\prime} = k+1$ and depth $d \in \mathbbm{N}$. Note that in order to shatter $\Tcal^{\prime}$, there must exist a hypothesis that outputs at least $k+1$ ones across $k+1$ distinct instances in $\Xcal$. However, by definition, every hypothesis $h \in \Hcal$ outputs $1$ on at most $k$ distinct instances. Thus, $\Tcal^{\prime}$ cannot be shattered by $\Hcal_{k+1}$. Since this is true for all such $d \in \mathbbm{N}$, we have that $\operatorname{AL}_{k+1}(\Hcal_k) = 0$. This completes the proof as it must be the case that $\operatorname{W}(\Hcal) = k+1.$
\end{proof}


\end{document}